\newif\ifPREPRINT
\providecommand{\bm}{\pmb}
\newcommand{\vect}[1]{\bm{#1}}		%
\newcommand{\matr}[1]{\bm{#1}}		%
\newcommand{\nR}[1]{\mathbb{R}^{#1}}		%
\newcommand{\matrice}[1]{\begin{bmatrix} #1 \end{bmatrix}}	%
\newcommand{\upperRomannumeral}[1]{\uppercase\expandafter{\romannumeral#1}}	%
\newcommand{\vSpace}{\;\,}
\newcommand{\diag}[1]{\text{diag}\left( #1 \right)}
\DeclarePairedDelimiter{\norm}{\lVert}{\rVert} %
\DeclareMathOperator*{\argmin}{arg\,min}
\newcommand{\fig}{Fig.~}	%
\newcommand{\sect}{Sect.~}	%
\newcommand{\theo}{Theorem~}	%
\newcommand{\GenFrame}{\mathcal{F}}		%
\newcommand{\origin}{O}						%
\newcommand{\vX}{\vect{x}}					%
\newcommand{\vY}{\vect{y}}					%
\newcommand{\vZ}{\vect{z}}					%
\newcommand{\pos}{\vect{p}}				%
\newcommand{\dpos}{\vect{v}} %
\newcommand{\ddpos}{\dot{\dpos}} %
\newcommand{\vZero}{\vect{0}}				%
\newcommand{\eye}[1]{\matr{I}_{#1}}		%
\newcommand{\frameW}{\GenFrame_W}			%
\newcommand{\originW}{\origin_W}		%
\newcommand{\xW}{\vX_W}				%
\newcommand{\yW}{\vY_W}				%
\newcommand{\zW}{\vZ_W}				%
\newcommand{\frameH}{\GenFrame_H}			%
\newcommand{\originH}{\origin_H}			%
\newcommand{\xH}{\vX_H}				%
\newcommand{\yH}{\vY_H}				%
\newcommand{\zH}{\vZ_H}				%
\newcommand{\pH}{\pos_H}			%
\newcommand{\pHx}{p_{Hx}}			%
\newcommand{\pHy}{p_{Hy}}			%
\newcommand{\dpH}{\dpos_H}			%
\newcommand{\dpHX}{v_{Hx}}			%
\newcommand{\dpHY}{v_{Hy}}			%
\newcommand{\dpHZ}{v_{Hz}}			%
\newcommand{\ddpH}{\ddpos_H}		%
\newcommand{\massH}{m_H}				%
\newcommand{\dampingH}{\matr{B}_H}	%
\newcommand{\groundForceIntensity}{f_g}
\newcommand{\groundForce}{ \vect{f}_g}
\newcommand{\pHRef}{{\pH^r}}
\newcommand{\pHX}{p_{Hx}}
\newcommand{\pHY}{p_{Hy}}
\newcommand{\pHZ}{p_{Hz}}
\newcommand{\pHDes}{{\pH^d}}
\newcommand{\humanForce}{\vect{u}_H}
\newcommand{\gravityH}{\vect{g}_H}
\newcommand{\length}{\bar{l}_{c}}					%
\newcommand{\springCoeff}{k_{c}}			%
\newcommand{\cableForce}{\vect{f}_{c}}			%
\newcommand{\cableForceEq}{\bar{\vect{f}}_{c}}			%
\newcommand{\cableForceX}{{f}_{cx}}			%
\newcommand{\cableForceY}{{f}_{cy}}			%
\newcommand{\cableForceZ}{{f}_{cz}}			%
\newcommand{\cableAttitude}{\vect{l}_{c}}%
\newcommand{\cableAttitudeNorm}{\norm{\vect{l}_{c}}}%
\newcommand{\cableAttitudeNormOf}[1]{\norm{\vect{l}_{c}(#1)}}%
\newcommand{\dCableAttitude}{\dot{\vect{l}}_{c}}
\newcommand{\tension}{t_{c}(\norm{\cableAttitude})}				%
\newcommand{\tensionOf}[1]{t_{c}(#1)}				%
\newcommand{\cableAttitudeNormInitial}{{l}_{c0}}%
\newcommand{\frameR}{\GenFrame_{R}}			%
\newcommand{\originR}{O_{R}}					%
\newcommand{\xR}{\vX_{R }}								%
\newcommand{\yR}{\vY_{R }}								%
\newcommand{\zR}{\vZ_{R }}								%
\newcommand{\pR}{\pos_{R }}						%
\newcommand{\pRX}{p_{Rx}}						%
\newcommand{\pRY}{p_{Ry}}						%
\newcommand{\pRZ}{p_{Rz}}						%
\newcommand{\dpR}{\dpos_{R }}					%
\newcommand{\dpRX}{v_{Rx}}						%
\newcommand{\dpRY}{v_{Ry}}						%
\newcommand{\dpRZ}{v_{Rz}}						%
\newcommand{\ddpR}{\ddpos_{R }}				%
\newcommand{\ddpRX}{\dot{v}_{Rx}}				%
\newcommand{\ddpRZ}{\dot{v}_{Rz}}				%
\newcommand{\uR}{\vect{u}_{R }}				%
\newcommand{\dampingA}{\matr{B}_{A}}		%
\newcommand{\inertiaA}{\matr{M}_{A}}		%
\newcommand{\uA}{\vect{u}_{A}}					%
\newcommand{\state}{\vect{x}}						%
\newcommand{\stateSet}{\mathcal{X}}						%
\newcommand{\stateSetTaut}{\stateSet_{T}}						%
\newcommand{\stateSetSlack}{\stateSet_{S}}						%
\newcommand{\stateEq}{\bar{\vect{x}}}						%
\newcommand{\dstate}{\dot{\vect{x}}}						%
\newcommand{\dynamics}[1]{\vect{f}(#1)}
\newcommand{\kpH}{k_{H}}	%
\newcommand{\KpH}{\matr{K}_{H}}	%
\newcommand{\cableForceZDes}{{f}_{z}}			%
\newcommand{\cableForceRef}{{\cableForce^r}}		%
\newcommand{\cableForceDes}{{\vect{f}^d}}		%
\newcommand{\cableForceDesXY}{{\vect{f}^d_{xy}}}		%
\newcommand{\cableForceDesXYNorm}{{{f}^d_{xy}}}		%
\newcommand{\cableForceDesZ}{{\vect{f}^d_z}}		%
\newcommand{\cableForceRefFinal}{{\vect{f}^r_T}}		%
\newcommand{\errorR}{\vect{e}_R}
\newcommand{\errorRFinal}{\vect{e}_{RT}}
\newcommand{\errorRX}{{e}_{Rx}}
\newcommand{\errorRY}{{e}_{Ry}}
\newcommand{\pRRef}{{\pR^r}}
\newcommand{\pRXRef}{{p_{Rx}^r}}
\newcommand{\pRYRef}{{p_{Ry}^r}}
\newcommand{\pRDes}{{\pR^d}}
\newcommand{\timeVaryingInput}{\uA'}
\newcommand{\passiveInput}{\vect{u}}
\newcommand{\passiveOutput}{\vect{y}}
\newcommand{\passiveInputTrajectory}{\vect{u}_1}
\newcommand{\passiveOutputTrajectory}{\vect{y}_1}
\newcommand{\passiveInputHumanForce}{\vect{u}_2}
\newcommand{\passiveOutputHumanForce}{\vect{y}_2}
\newcommand{\lyapunovFunction}{V(\state)} %
\newcommand{\dLyapunovFunction}{\dot{V}(\state)} %
\newcommand{\invariantSet}{\Omega_{\alpha}}		%
\newcommand{\invariantSetZero}{\Omega_{0}}		%
\newcommand{\maxInvariantSet}{\mathcal{M}}
\newcommand{\dVZeroSet}{\mathcal{E}}				%
\newcommand{\param}{s} 
\newcommand{\paramOpt}{s^\star} 
\newcommand{\pHDesPerp}{\pHDes^\perp}
\newcommand{\errorH}{\vect{e}_H}
\newcommand{\errorHPath}{{e}_H^P}
\newcommand{\errorHPathMean}{\bar{{e}}^P_H}
\newcommand{\errorHPathStd}{{\sigma}_H}
\newtheorem{thm}{Theorem}
\theoremstyle{definition}
\theoremstyle{remark}
\newtheorem{rmk}{Remark}
\begin{document}

\newcommand{\titlePaper}{Physical Human-Robot Interaction with a Tethered Aerial Vehicle: Application to a Force-based Human Guiding Problem}
\newcommand{\authorsPaper}{Marco~Tognon~\IEEEmembership{Member,~IEEE,} Rachid~Alami~\IEEEmembership{Member,~IEEE,} Bruno~Siciliano~\IEEEmembership{Fellow,~IEEE}}

\title{\titlePaper}

\author{\authorsPaper%
\thanks{M. Tognon is with the Autonomous Systems Lab, Department of Mechanical and Process Engineering, ETH Zurich, 8092 Z\"urich, Switzerland, {\tt \scriptsize\href{mailto:mtognon@ethz.ch}{mtognon@ethz.ch}}. The work has been done while the author was at LAAS-CNRS, Toulouse, France.}%
\thanks{R. Alami is with LAAS-CNRS, Universit\'e de Toulouse, CNRS, Toulouse, France, {\tt \scriptsize \href{mailto:rachid.alami@laas.fr}{rachid.alami@laas.fr}.}}%
\thanks{B. Siciliano is with the Department of Electrical Engineering and Information Technology
University of Naples Federico II
Via Claudio 21, 80125 Naples, Italy  {\tt \scriptsize \href{mailto:bruno.siciliano@unina.it}{bruno.siciliano@unina.it}.}}

\ifPREPRINT
	{}
\else	
	\thanks{Manuscript received Month DD, YYYY; revised Month DD, YYYY.}
\fi
}

\ifPREPRINT
	{}
\else	
\markboth{IEEE TRANSACTIONS ON ROBOTICS,~Vol.~XX, No.~XX, Month~YEAR}%
{Tognon \MakeLowercase{\textit{et al.}}: \titlePaper}
\fi
\maketitle

\begin{abstract}
	Today, physical Human-Robot Interaction (pHRI) is a very popular topic
in the field of ground manipulation. At the same time, Aerial Physical
Interaction (APhI) is also developing very fast. Nevertheless, pHRI with
aerial vehicles has not been addressed so far. In this work, we present
the study of one of the first systems in which a human is physically
connected to an aerial vehicle by a cable. We want the robot to be
able to pull the human toward a desired position (or along a path) only
using forces as an indirect communication-channel.
We propose an admittance-based approach that makes pHRI safe. A controller, inspired by the literature on flexible manipulators, computes the desired interaction forces that properly guide the human. The stability of the system is formally proved with a Lyapunov-based argument. The system is also shown to be passive, and thus robust to non-idealities like additional human forces, time-varying inputs, and other
external disturbances. We also design a maneuver regulation policy to
simplify the path following problem. The global method has been experimentally validated on a group of four subjects, showing a reliable and safe pHRI.
\end{abstract}
\begin{IEEEkeywords}
	Aerial Systems: Mechanics and Control, Physical Human-Robot Interaction, Motion Control, Force Control.
\end{IEEEkeywords}
\IEEEpeerreviewmaketitle

\section{Introduction}\label{sec:intro}

\IEEEPARstart{A}{erial} robotics is one of the research fields receiving a constantly growing interest.
This is motivated by the very wide range of applications which \textit{Unmanned Aerial Vehicles} (UAVs) could be useful for.
Popular examples are monitoring, surveillance, agriculture and many others. %
Beyond these contact-free applications, many investments and efforts have been recently dedicated to \textit{Aerial Physical Interaction} (APhI).
Motivated by its scientific interest and great potential business,  several research labs and companies are conceiving and studying new aerial robots with manipulation capabilities that can interact with the environment~\cite{2018-RugLipOll}.

The research community already proposed several solutions to enhance physical interaction capabilities of aerial vehicles.
The most simple solution consists in using a rigid tool attached to a standard unidirectional-thrust aerial vehicle. This allows exchanging forces with the environment, e.g., by pushing or sliding~\cite{2013-NguLee,2016-BarCapHamStrFum}.
In order to cope with the underactuation of the system, aerial vehicles equipped with single or multiple articulated arms (also called \textit{aerial manipulators}) have been proposed~\cite{2017g-TogYueBuoFra,2019-CatReaSuaDilPieAntCacHerOll}.
Furthermore, the recent design of tilted or tiltable multi-rotor platforms that are fully-actuated, allowed improving aerial physical interaction capabilities even further~\cite{2016j-RylBicFra,kamel2018voliro}.
Thanks to the several proposed platforms and control methods, the feasibility of physical interaction with aerial robots has been shown for different tasks like pushing and sliding~\cite{2019e-TogTelGasSabBicMalLanSanRevCorFra}, transportation~\cite{2011-BerKonMazOll}, etc.
This opens the door to new application domains like contact-based inspection~\cite{2018m-OllHerFraAntKonSanVigSanTruBalAndRod}, assembly and decommissioning.

With the advance of aerial robotics, the presence of such robots in our daily life will grow as well. 
This means that the next generation of aerial robots must be able to safely and reliably interact with the environment, but also with humans.
So far, \textit{Human-Robot Interaction} (HRI) has been rarely addressed in the aerial robotics community (likely due to safety concerns).
The majority of the works focus on remote control.
Bilateral teleoperation methods have been presented to help humans control single and multiple aerial vehicles navigating in cluttered environments~\cite{2014-MerStrCar}.
Only recently, the teleoperation problem has been extended to APhI as well~\cite{2015c-GioMohFraPra}.  

Nevertheless, in all these cases, humans and robots are in separated environments and the interaction is only virtual, using haptic devices.
The study of scenarios where aerial robots share the same environment of humans is still poorly addressed and limited to contactless HRI. 
In~\cite{2017-PesHitKau}, the authors propose a survey on the use of natural communication means like gestures, speech and gaze direction to let the human interface with the robot.

Beyond the study of interface methods between humans and aerial robots, another important topic recently explored is the social acceptance of such robots~\cite{2017-AchBevDun}.
In~\cite{2013-LieYai} the problem is addressed from a design point of view, analyzing how the propellers of multi-rotor platforms affect the comfort of humans.
In a different direction, a small number of papers such as \cite{2019-WojFreSasShaCau,2013-DunMur}  have recently started to investigate the social constraints (distance, speed, direction of approach, etc.) that should be applied to the motion of an aerial robot in the proximity of a human. 
While human-aware motion planning issues have been substantially studied for ground robots\cite{2013-KruPanAlaKir, 2015-RioSpaLau}, it is still very preliminary for aerial robots. This is also true for cognitive interaction \cite{2016-ThoHofCca, 2017-LemWarSisCloAla}.

Therefore, the works presented so far focus only on contactless HRI for  aerial robots.
On the other hand, \textit{physical Human-Robot Interaction} (pHRI) for aerial vehicles is still an unexplored domain.
To the best of our knowledge, the only work addressing this topic is~\cite{2013-AugDan} where a compliant controller based on an admittance filter is proposed. 
Compliance with respect to external forces is shown, but the interaction with the human is limited to simple contacts.

Differently, research done on pHRI with ground manipulators is already quite extensive~\cite{2008-DesSicDelBic,2018-AjoZanIvaAlbKosKha}.
Many control methods, most of them based on admittance~\cite{2015-FicVilSic,2007-AlbHadOttSteWimHir,1985-Hog} and passivity-based methods~\cite{1988-ColHog}, have been proposed for applications like co-manipulation of heavy loads~\cite{2014-AgrCheBusGerKhe}. 

To advance the study of pHRI methods for aerial robots, in this work we face one of the first systems in which a human and an aerial robot are tightly coupled by a physical means.
In particular, we consider a human holding a handle which is in turn connected to an aerial vehicle by a cable (see \fig\ref{fig:model}).
In this manuscript, we want to design a control strategy that allows the robot to safely ``physically guide'' the human toward a desired position (or along a predefined path) exploiting the cable as a force-based communication means.
In previous works~\cite{2017a-TogFra,2016c-TogDasFra}, we have already shown that \textit{tethered aerial vehicles} are very suitable for physical interaction and that the forces along a cable can be successfully used as an indirect communication means among robotic agents~\cite{2018h-TogGabPalFra}.

The problem of guiding humans by aerial vehicles has indeed been already addressed in the literature but proposing contactless approaches only, e.g., based on sound~\cite{2015-AviFunHen} or vision~\cite{2015-MueMui}.
In this work, we rather exploit the physical interaction using forces as more direct feedback.  
This solution particularly suits for persons with visual or hearing limitations.
There have been a number of contributions of ground robots guiding humans. To cite a few: \cite{BurgardCFHLSST99, SiegwartABBFGJLMMPPRTT03, ClodicFACBBCDDE06, TriebelAABBCCCE15}. However, while they have considered a variety of capabilities to take into account human constraints and a variety of interaction modalities, only a small number of contributions considered pulling mobile robots or walking helpers such as autonomous walkers for assisting elderly  \cite{Morris2003} or visually impiared people \cite{Tachi1985,2009-MelPraNagIll}.
On the other hand, the use of aerial vehicles in physical interaction with humans demands a careful treatment of the interaction forces and the stability of the system.

Inspired by the extensive literature of pHRI for ground manipulators, we firstly propose the use of an \textit{admittance-based strategy} to make the robot comply with the human. 
We shall show that exploiting this approach we can impose a desired cable force that has the function to pull the human toward the desired point.
The design of such input requires particular attention due to the flexible nature of the physical link connecting the human to the robot.
Taking inspiration from the theory on control of manipulators with elastic joints~\cite{2008-DelBoo_cha13}, we design a \textit{robot-feedback solution} together with a feedforward term. 
The latter enforces the tautness of the cable.
Approximating the human behavior as a mass-spring-damper system and considering the overall closed-loop system, we formally prove (though Lyapunov theory) that the proposed control method allows the robot to asymptotically steer the human to the desired position.

We also show that the system is \textit{output-strictly passive}. 
This guarantees the stability of the system even in case of
\begin{inparaenum}[i)]
	\item model errors, 
	\item time-varying reference for the human position, and 
	\item additional forces applied by the human to the cable (e.g. if he/she wants to stop). 
\end{inparaenum}
This property grants robustness and safety of the method.

To address the problem of path following, we design a \textit{maneuver regulation method} based on the works in \cite{2013n-SpeNotBueFra,1995-HauHin}.
This avoids the excessive growth of the tracking error which may lead to input saturation and thus instability, or to big deviation from the desired path.

The control performance and stability of the system are validated by real experiments firstly showing (from the best of our knowledge) safe and reliable physical interaction between a human and an aerial vehicle.   
 
The manuscript is organized as follows:
In \sect\ref{sec:modeling} we model the entire system. 
In \sect\ref{sec:control} and \sect\ref{sec:equilibriaAndStability}, we describe the proposed control strategy and we analyze the stability of the closed-loop system.
The additional passivity property is showed in \sect\ref{sec:passivity}. 
In \sect\ref{sec:pathFollowing}, we describe the path following approach.
The experimental results validating the proposed method are reported in \sect\ref{sec:experiments}.
Section \ref{sec:conclusions} ends the manuscript with some final discussion.

\section{Modeling}\label{sec:modeling}
\begin{figure}
	\centering
	\includegraphics[width = \columnwidth]{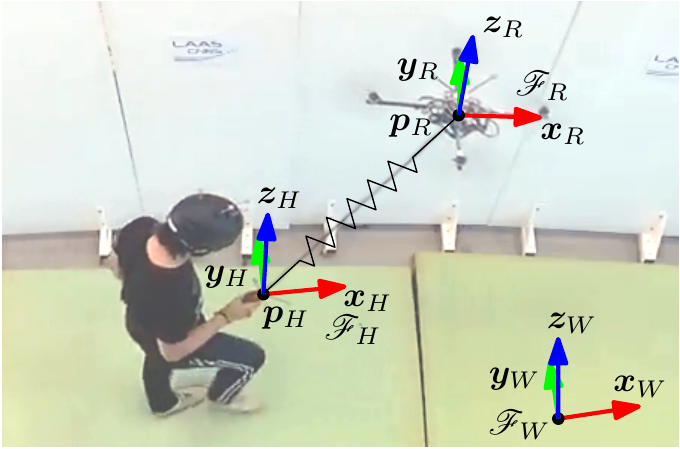}
	\caption{Representation of the aerial human-tethered guiding system.}
	\label{fig:model}
\end{figure}

The system here analyzed (see \fig\ref{fig:model}) is composed of an aerial vehicle tethered by a cable to a handle held by the hand of a human being. 
To describe it, we first define an \textit{inertial frame}, $\frameW = \{\originW,\xW,\yW,\zW\}$, where $\originW$ is the arbitrary origin and $\{\xW,\yW,\zW\}$ are its unit axes.  %
In particular, $\zW$ is oriented in the opposite direction to the gravity vector. 

We consider the position of the human as the position of his/her hand holding the handle. 
We define a \textit{body frame} rigidly attached to the handle,  $\frameH = \{\originH,\xH,\yH,\zH\}$.
$\originH$ is the origin of $\frameH$ and is centered in the point where the cable is attached to the handle. 
$\{\xH,\yH,\zH\}$ are its unit axes.
The state of the human is then given by the position of $\originH$ and its linear velocity, defined by the vectors $\pH = [\pHX \vSpace \pHY \vSpace \pHZ]^\top \in \nR{3}$ and $\dpH = [\dpHX \vSpace \dpHY \vSpace \dpHZ]^\top \in \nR{3}$, respectively, both with respect to $\frameW$. 

For the sake of designing a human-friendly robot controller, it is important to consider the human's behavior when interacting with the environment.
In \cite{1994-IkeMonIno}, the authors analyzed the interaction forces during a task in which two humans cooperatively moves an object. 
It turned out that, for this kind of task, the human's dynamics can be approximated with a mass-spring-damper system.
The validity of such a model has been confirmed in other works like~\cite{2005-SpeShaGol}.
 In \cite{1998-IkeMiz,2012-MorLawKucSezBasHir,1997-KosKaz,1993-KosFujFuk,2007-DucGos}, this impedance model has been used as a basis to develop human-robot cooperative or teleoperated manipulation tasks.  
We consider the human's dynamics as:
\begin{align}
	\massH \ddpH = -\gravityH - \dampingH\dpH + \cableForce + \groundForce,
	\label{eqn:humanModel}
\end{align}
where $\massH \in \nR{}_{>0}$ and the positive definite matrix $\dampingH\in \nR{3 \times 3}_{>0}$ are the apparent mass and damping, respectively; $\cableForce = [\cableForceX \vSpace \cableForceY \vSpace \cableForceZ]^\top \in \nR{3}$ is the cable force applied to the human at $\originH$; $\gravityH = \massH g \zW$; $g \in \nR{}_{>0}$  is the gravitational constant.
Without loss of generality, we can assume the ground being flat and horizontal.
$\groundForce$ is the ground reaction force such that:
\begin{itemize}
	\item $\groundForce^\top \xW = \groundForce^\top \yW = 0$,
	\item $\groundForce^\top \zW > 0$, and
	\item $\ddpH^\top \zW = \dpH^\top \zW = 0$, i.e., the human is constrained on the ground.
\end{itemize}
It follows that $\groundForce = \groundForceIntensity\zW$, where $\groundForceIntensity \in \nR{}_{\geq0}$ is the intensity of the reaction force.
Notice that it has to be $\cableForce^\top \zW < \massH g$.
Practically, this is always the case for standard small/medium size aerial vehicles.

The \textit{human mass}, $\massH$, models the reactivity of the human to external forces. 
A low mass means a high willingness of following external forces. 
For a high mass, it is the opposite.
The \textit{damping effect} models the willingness of the human to stay still.
Finally, in a mass-spring-damper system, the spring models the desire of reaching a specific point. 
However, in \eqref{eqn:humanModel} there is no spring term since, in our scenario, the human is not aware of the desired path. 
The human simply follows the external force applied by the robot through the cable.

\begin{rmk}
	The precise modeling of the human body dynamics is out of the scope of this work. 
	Through model \eqref{eqn:humanModel}, we want to capture only the macro navigation behavior of a human when pulled by an external force.
\end{rmk}

\begin{rmk}
	In this work, we focus on the control of the human's hand position rather than his/her head position. 
	This is motivated by the fact that a handle integrating positioning sensors like GPS, IMU, and cameras, is much more comfortable and wearable with respect to a fully equipped helmet. 
	Nevertheless, the use of a kinematic model could be employed to estimate the human's head position if required. 
\end{rmk}

The cable is attached from one side to the handle, and to the other side to the aerial vehicle at point $\originR$, which coincides with its center of mass.
To describe the aerial robot state we define the \textit{body frame} $\frameR = \{ \originR,\xR,\yR,\zR\}$, rigidly attached to it.
The robot configuration is then described by the position of $\originR$ with respect to (w.r.t.) $\frameW$ denoted by the vector $\pR = [\pRX \vSpace \pRY \vSpace \pRZ]^\top \in \nR{3}$.
We complete the state of the robot with the linear velocity $\dpR =  [\dpRX \vSpace \dpRY \vSpace \dpRZ]^\top \in \nR{3}$.
We neglect the rotational coordinates and relative velocity since it will be not relevant for the following derivations.

We assume that the robot is controlled by a position controller able to track any $C^2$  trajectory with negligible error in the domain of interest, independently from external disturbances. 
With the recent robust controllers (as the one in~\cite{2016j-RylBicFra,2020d-HamTogFra} for both unidirectional- and multidirectional-thrust vehicles) and disturbance observers for aerial vehicles, one can obtain very precise tracking, even in the presence of external disturbances. 
Nevertheless, in practice, precise tracking is not needed for stability, but only to achieve perfect performance.

The closed-loop translational dynamics of the robot subject to the position controller is assumed as the one of a double integrator: 
\begin{align}
	\ddpR = \uR,
	\label{eqn:robotModel}
\end{align}
where $\uR \in \nR{3}$ is a virtual input to be designed.
If we consider a multidirectional-thrust platform capable of controlling both position and orientation independently~\cite{2017e-RylMusPieCatAntCacFra}, the double integrator is an exact model of the closed-loop system apart from modeling errors. 
In the case of an underactuated unidirectional-thrust vehicle, the double integrator is instead a very good approximation in the working conditions, namely not aggressive maneuvers. 
Indeed, the rotational dynamics is totally decoupled from the translational one and it is much faster than the latter, allowing the application of the time-scale separation principle.
Furthermore, it is well known that a unidirectional-thrust vehicle is differentially flat w.r.t. to $\pR$ and the rotation along the thrust direction. 
According to \eqref{eqn:robotModel}, the platform might seem `infinitely stiff' with respect to interaction forces, like the one produced by the cable.
However, we shall suitably design the input $\uR$ so as to re-introduce a compliant behavior. 

Afterward, we model the cable that connects the aerial vehicle and the handle, or equivalently the human hand, as a unilateral spring along its principal direction.
We define the cable force produced at $\originH$ on the handle as:
\begin{align}\label{eqn:cableForce}
	\cableForce = \tension {\cableAttitude}/{\cableAttitudeNorm},
\end{align}
where $\tension$ represents the cable internal force intensity and $\cableAttitude = \pR - \pH$. 
The force produced on the other end of the cable, namely on the robot at $\originR$, is equal to $-\cableForce$.

In practice, the forces involved shall be very small (few newtons) and the distance between the human and the robot can be of few meters only.
For these reasons, a thin and short cable can be used which has negligible mass and inertia w.r.t. the ones of the robot and of the human. 

Defining by $\length \in \nR{}_{> 0}$ the constant nominal length of the cable, we say that the cable is:
\begin{enumerate}
	\item \textit{Slack} when the distance between the two ends is shorter than $\length$. In this case, the cable force intensity is zero;
	\item \textit{Taut} when the distance between the two ends is equal or greater than $\length$. In this case, the cable force intensity follows Hooke's law.
\end{enumerate} 
In particular, $\tension$, represented in \fig\ref{fig:model:cable_internal_force}, is defined as:
\begin{align}\label{eqn:cableForceIntensity}
	\begin{split}
	\tension = \begin{cases}
					\springCoeff(\cableAttitudeNorm - \length)& \text{ if } \cableAttitudeNorm - \length > 0 \\
					\vZero	&	\text{ otherwise }
				\end{cases},
	\end{split}			
\end{align}
where $\springCoeff \in \nR{}_{> 0}$ is the constant elastic coefficient.
\begin{figure}
	\centering
	\includegraphics[width = \columnwidth]{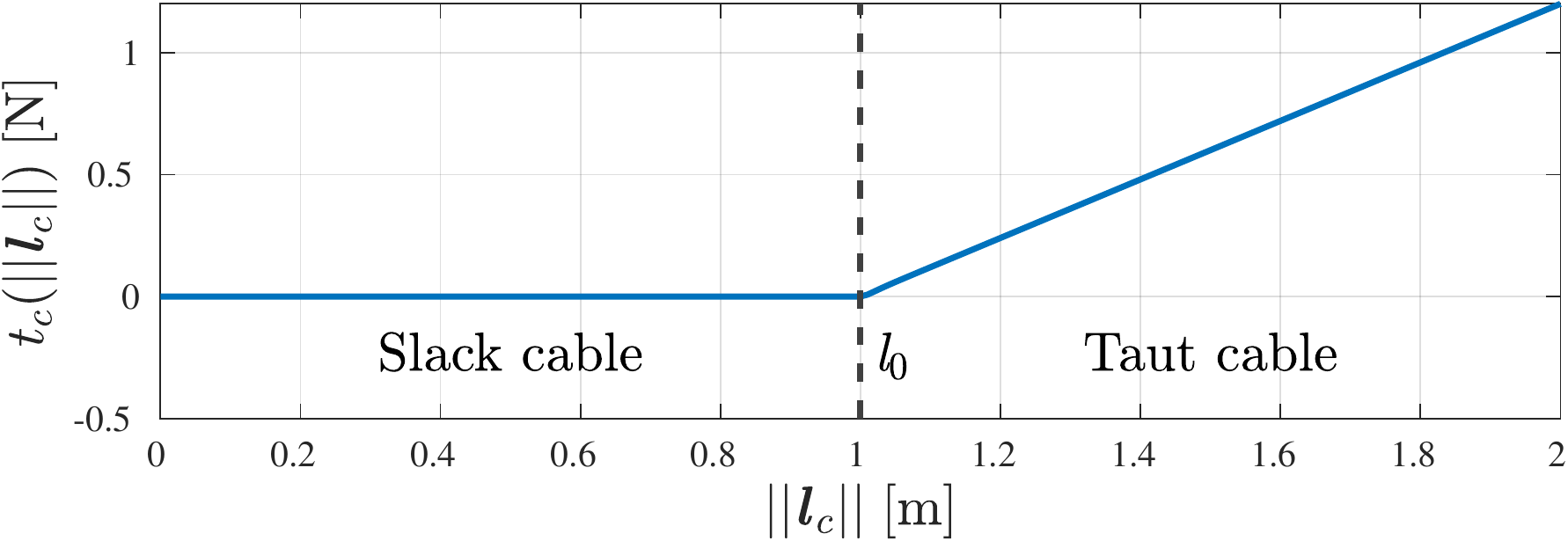}
	\caption{Representation of $\tension$ as in \eqref{eqn:cableForceIntensity}.}
	\label{fig:model:cable_internal_force}
\end{figure}

\begin{rmk}
Notice that $\tension$ can be any continuous and differentiable monotonically increasing function such that:
\begin{align}
	\tension \geq \epsilon \cableAttitudeNorm + \gamma, \text{ if } \cableAttitudeNorm - \length > 0, 
	\label{eqn:tensionMajority}
\end{align}
for some $\epsilon \in \nR{}_{> 0}$ and $\gamma \in \nR{}$. 
This allows modeling more complex behaviors of the cable during the initial stretching/unstretching. 
We also consider the system to be always far from the maximum feasible tension.
\end{rmk}

\section{Control Method}\label{sec:control}

Since the robot physically interacts with the human, it is important that the interaction is done in a compliant way, so as to always preserve stability and prevent the human to be armed or subjected to uncomfortable forces.
For this problem, and more in general for physical interaction problems, a very common solution consists of using an \textit{admittance control strategy}~\cite{2016-HadCro}.
According to this method, the desired motion of the robot is computed as the one of an object with a specific admittance (mass, stiffness, and damping) subjected to the measured external force acting on the robot~\cite{2009-SicSciVilOri}. 
In this way one can reshape the apparent mechanical characteristics of the system, making it more or less stiff or damped.
We define the robot control input $\uR$ as: 
\begin{align}\label{eqn:admittance}
	\uR = \inertiaA^{-1}\left( -\dampingA\dpR - \cableForce + \uA \right),
\end{align}
where the two positive definite diagonal matrices $\inertiaA,  \dampingA \in \nR{3 \times 3}$ are the virtual inertia and damping of the robot.
$\uA \in \nR{3}$ is an additional input that will be defined in the following in order to achieve the desired control goal.
Notice that in order to implement \eqref{eqn:admittance}, only the state of the robot $(\pR,\dpR)$, and the force applied by the cable $\cableForce$ are needed. 
The first can be retrieved with standard onboard sensors, while the second can be directly measured with a force sensor or estimated by a model-based observer as done in~\cite{2017e-RylMusPieCatAntCacFra,2017-TagKamVerSieNie}. 

It is useful to define the system state vector as $\state = [\pH^\top \vSpace \dpH^\top  \vSpace \pR^\top  \vSpace \dpR^\top]^\top \in \stateSet \subset \nR{12}$ and to write the dynamics in a more compact state form as $\dstate = \dynamics{\state,\uA}$, where:
\begin{align}
	\dynamics{\state,\uA} &= \matrice{\dpH \\  \frac{1}{\massH}\left(-\gravityH - \dampingH\dpH + \cableForce +  \groundForce\right) \\ \dpR \\ \inertiaA^{-1}\left( -\dampingA\dpR - \cableForce + \uA \right) },
	\label{eqn:closedLoopSystemDyn}
\end{align}
and $\cableForce$ is computed as in \eqref{eqn:cableForce}. 

\begin{rmk}
The system has a \textit{hybrid nature} characterized by two different behaviors according to the state of the cable:
\begin{enumerate}
	\item When $\state \in \stateSetSlack = \{ \state \in \stateSet \; | \;  \norm{\cableAttitude} - \length \leq 0 \}$, the cable is slack and the cable force is zero, i.e., $\cableForce = \vZero$. In this condition, the robot and human systems are completely independent;
	\item When $\state \in \stateSetTaut = \{ \state \in \stateSet \; | \;  \norm{\cableAttitude} - \length > 0 \}$, the cable is taut and the cable force is non-zero, i.e., $\cableForce \neq \vZero$. In this condition, the robot and human systems interact with each other through the cable force.
\end{enumerate}
\end{rmk}
 
Looking at the system dynamics when the cable is taut, it appears to be a couple of mass-damper elements connected by a spring. 
This system is similar to an elastic manipulator where the actuator (the robot) is connected to the end-effector (the human) by an elastic element (the cable).
For this system, we want to control the position of the end-effector (human position) by the input force $\uA$ generated by the actuator (robot).
In view of this parallelism, we naturally took inspiration from the state-of-the-art on control of manipulators with elastic joints~\cite{2008-DelBoo_cha13}.

Without the elastic element, the most natural choice for the design of a linear controller is to apply a simple proportional-derivative (PD) feedback from the end-effector position and velocity.
However, considering a single link rotating on a horizontal plane and actuated with a motor through an elastic joint coupling, it has been proven that a feedback action entirely based on the end-effector position and velocity leads to instability, no matter the gain values~\cite{2008-DelBoo_cha13}. 

The most preferred and grounded solution for manipulators with elastic joints is to apply a feedback completely based on the motor variables.
If the proportional and derivative gains are strictly positive, then the closed-loop system will be asymptotically stable.
Notice that, given a desired end-effector position, the desired motor value is computed inverting the dynamic model whose parameters need to be known.

To implement this control law, we only need a proportional feedback w.r.t. the robot position. The damping action w.r.t. the robot velocity is already present in \eqref{eqn:admittance},  
Thus, we set  
\begin{align}
	\uA = \KpH \errorR + \cableForceRef,
	\label{eqn:controller}
\end{align}
where $\KpH = \diag{\kpH,\kpH,0}$ with $\kpH \in \nR{}_{> 0}$, $\errorR = \pRRef - \pR$ is the robot position error, $\pRRef$ is the robot position of reference, and $\cableForceRef \in \nR{3}$ is a constant forcing input. 
In practice, $\cableForceRef$ is the desired cable force.
Figure~\ref{fig:controller:workingPrinciple} explains the working principle of the controller, while Figure \ref{fig:controller:blockDiagram} shows its block diagram.

\begin{figure}[t]
	\centering
	\includegraphics[width = 0.8\columnwidth]{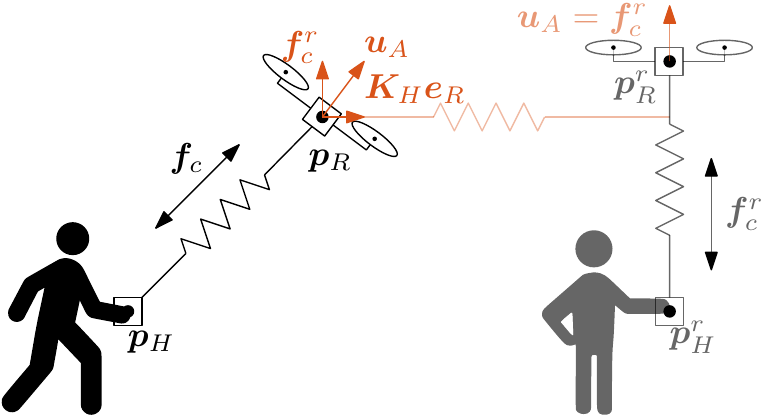}
	\caption{Working principle of the control method. On the left the current state, on the right in opaque, the final desired state.}
	\label{fig:controller:workingPrinciple}
\end{figure}
\begin{figure}[t]
	\centering
	\includegraphics[width = \columnwidth]{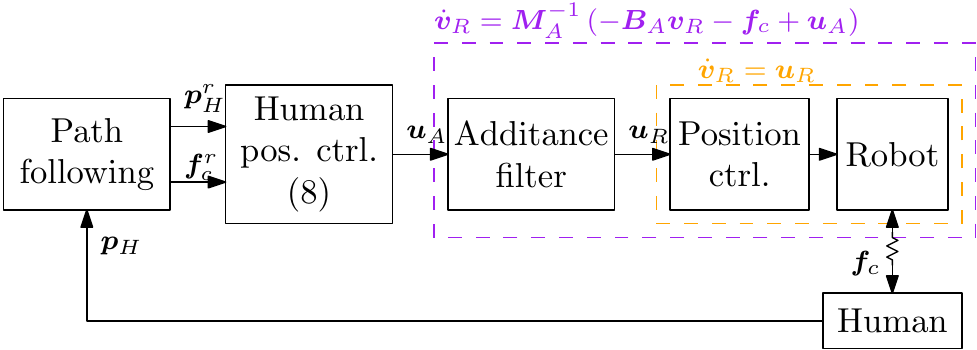}
	\caption{Block diagram of the overall control method.}
	\label{fig:controller:blockDiagram}
\end{figure}
\section{Equilibria and Stability Analysis}\label{sec:equilibriaAndStability}

In the following, we shall show that, by properly choosing $\pRRef$ and $\cableForceRef$, the control law \eqref{eqn:controller} can steer the human position to the desired value. 

\begin{thm}\label{thm:stability}
	Let us define $\pHRef \in \nR{3}$ as the human position reference such that $\zW^\top\pHRef = 0$. Considering the system \eqref{eqn:closedLoopSystemDyn} under the control law  \eqref{eqn:controller} where 
	\begin{align}
		\cableForceRef &= \cableForceZDes\zW \label{eqn:regulation:cableForceRef} \\
		\pRRef &= \pHRef + \cableForceRef \left( \frac{1}{\springCoeff} + \frac{\length}{\norm{\cableForceRef}} \right),\label{eqn:regulation:pRRef}
	\end{align}
	the zero velocity equilibrium $\stateEq = [\pHRef^\top \vSpace \vZero^\top \vSpace \pRRef^\top \vSpace \vZero^\top]^\top \in \stateSetTaut$ is asymptotically stable if $0 < \cableForceZDes < \massH g$.
\end{thm}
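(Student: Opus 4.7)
The plan is to build a total-energy Lyapunov function for the closed loop \eqref{eqn:closedLoopSystemDyn}--\eqref{eqn:controller}, to show it has a strict local minimum at $\stateEq$ and a negative semi-definite derivative, and to conclude with LaSalle's invariance principle. As a preliminary step I would verify that $\stateEq$ is indeed an equilibrium lying in $\stateSetTaut$: inserting \eqref{eqn:regulation:cableForceRef}--\eqref{eqn:regulation:pRRef} gives $\cableAttitude = (\length+\cableForceZDes/\springCoeff)\zW$, whose norm strictly exceeds $\length$, and the resulting cable force equals $\cableForceRef$. This closes the robot balance, since $\uA=\KpH\vZero+\cableForceRef=\cableForce$, and is matched on the human side by the ground reaction $\groundForceIntensity=\massH g-\cableForceZDes$, which is strictly positive precisely because $0<\cableForceZDes<\massH g$.

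For the Lyapunov analysis I would take
\begin{align*}
V(\state) = {}& \tfrac{1}{2}\dpH^\top\massH\dpH + \tfrac{1}{2}\dpR^\top\inertiaA\dpR + U_c(\cableAttitude) \\
& + \tfrac{1}{2}\errorR^\top\KpH\errorR - \cableForceRef^\top\pR + c,
\end{align*}
where $U_c$ is the cable elastic potential, $U_c=\tfrac{1}{2}\springCoeff(\cableAttitudeNorm-\length)^2$ in $\stateSetTaut$, so that $\nabla_{\cableAttitude}U_c=\cableForce$, and $c$ is a constant making $V(\stateEq)=0$. Differentiating along \eqref{eqn:closedLoopSystemDyn}, the cable-force terms coming from the two kinetic-energy derivatives telescope against $\dot U_c$; gravity and the ground reaction do no work because $\dpH^\top\zW=0$; and substituting \eqref{eqn:controller} cancels the gradient contribution of $\tfrac{1}{2}\errorR^\top\KpH\errorR-\cableForceRef^\top\pR$. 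What remains is $\dLyapunovFunction = -\dpH^\top\dampingH\dpH - \dpR^\top\dampingA\dpR \le 0$.

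The crux of the proof, and the main obstacle, is proving that $V$ has a strict local minimum at $\stateEq$ on the constraint set $\{\zW^\top\pH=0\}$, because $\KpH=\diag{\kpH,\kpH,0}$ is only positive semi-definite: the missing vertical stiffness for the robot has to be supplied by the cable. I would compute the potential Hessian at $\stateEq$: $U_c$ contributes $\springCoeff\zW\zW^\top + \tfrac{\cableForceZDes}{\length\springCoeff+\cableForceZDes}(\eye{3}-\zW\zW^\top)$ on the relative variable $\pR-\pH$, while $\KpH$ acts on $\pR$ alone, and a block decomposition in the coordinates $(\pH_{xy},\pR)$ shows the result is block-diagonal with a positive definite $2{\times}2$ block in each horizontal pair (stiffened by both springs, with determinant $\alpha\kpH$ where $\alpha=\cableForceZDes/(\length\springCoeff+\cableForceZDes)$) and a positive vertical block $\springCoeff$ in the $\pRz$ direction. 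Hence a compact sublevel set $\invariantSet\subset\stateSetTaut$ is positively invariant. LaSalle's invariance principle applied on $\invariantSet$ then restricts trajectories to the largest invariant subset of $\{\dLyapunovFunction=0\}=\{\dpH=\dpR=\vZero\}$, where invariance additionally imposes $\ddpH=\ddpR=\vZero$. The human equation, with the horizontal ground reaction vanishing, forces $\cableForce$ to be vertical; the robot equation gives $\cableForce=\KpH\errorR+\cableForceRef$, yielding $\errorR_{xy}=\vZero$ and $\cableForce=\cableForceRef$; and strict monotonicity of $\tensionOf{\cableAttitudeNorm}$ in the taut region fixes $\cableAttitudeNorm=\length+\cableForceZDes/\springCoeff$ with $\cableAttitude$ vertical. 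The only state compatible with all these conditions is $\stateEq$, so asymptotic stability follows.
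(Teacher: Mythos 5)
Your proof is correct and shares the paper's overall architecture (energy-type Lyapunov function, the same dissipation identity $\dLyapunovFunction = -\dpH^\top\dampingH\dpH - \dpR^\top\dampingA\dpR$, and LaSalle with the same characterization of the largest invariant set), but it takes a genuinely different route at the key step of positive definiteness. The paper keeps the elastic potential in the general integral form $\int t_c$, proves $\lyapunovFunction$ is radially unbounded, invokes existence of a global minimum, and then shows by a stationary-point/gradient argument (using the unilaterality $\tension\geq 0$ to exclude the ``compressed cable'' branch) that $\stateEq$ is the \emph{unique} critical point --- which yields \emph{global} asymptotic stability, including trajectories that cross between $\stateSetSlack$ and $\stateSetTaut$. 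You instead specialize to the Hooke potential and check the Hessian of the potential at $\stateEq$ on the constraint surface $\zW^\top\pH=0$; this is a clean and more elementary way to see why the semi-definite $\KpH$ is not a problem (the cable supplies the missing stiffness), but it only delivers a strict \emph{local} minimum and hence local asymptotic stability --- sufficient for the theorem as stated, weaker than what the paper actually proves. Your linear correction term $-\cableForceRef^\top\pR$ differs from the paper's $-\cableForceRef^\top\cableAttitude$ only by $\cableForceRef^\top\pH=\cableForceZDes\,\zW^\top\pH$, which is constant on the ground constraint, so the two storage functions coincide up to a constant. Two small points to tidy up: the transverse stiffness block of the cable Hessian should be $\tension/\cableAttitudeNorm$ at equilibrium, i.e.\ $\springCoeff\cableForceZDes/(\springCoeff\length+\cableForceZDes)$, not $\cableForceZDes/(\length\springCoeff+\cableForceZDes)$ (your expression is dimensionally a pure number; the sign, and hence the conclusion, is unaffected); and the claim that a sublevel set contained in $\stateSetTaut$ is compact needs a word --- without radial unboundedness you should take the connected component of a sufficiently small sublevel set containing $\stateEq$, which is compact precisely because the minimum is strict.
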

\begin{IEEEproof}
	Let us first compute the equilibrium state.
	Plugging \eqref{eqn:controller} into \eqref{eqn:closedLoopSystemDyn} yields the final closed-loop dynamics:
\begin{align}
	\dynamics{\state} &= \matrice{\dpH \\  \frac{1}{\massH}\left(-\gravityH - \dampingH\dpH + \cableForce +  \groundForce\right) \\ \dpR \\ \inertiaA^{-1}\left( -\dampingA\dpR - \cableForce + \KpH \errorR + \cableForceRef \right) }.
	\label{eqn:closedLoopSystemDyn:regulation}
\end{align}
Imposing the stability condition into \eqref{eqn:closedLoopSystemDyn:regulation}, i.e., $\dstate = \dynamics{\state} =\vZero$, from the second row we get:
\begin{align}
	\cableForce = \left( \groundForceIntensity -\massH g \right) \zW.
	\label{eqn:cableForceEq}
\end{align}
This means that at the equilibrium the cable force, and thus the cable itself, have to be vertical.
From the fourth row in \eqref{eqn:closedLoopSystemDyn:regulation} and replacing \eqref{eqn:cableForceEq} we obtain
\begin{align}
	\vZero = \left(\groundForceIntensity  -\massH g + \cableForceZDes \right) \zW + \KpH\errorR. 
	\label{eqn:regulation:equilibriumCondition:1}	
\end{align}
Expanding the components of $\KpH\errorR$ along the axes of $\frameW$, it ies easy to verigy that \eqref{eqn:regulation:equilibriumCondition:1}  is satisfied if:
\begin{align}
	\begin{array}{ccc}
		\xW^\top\errorR = 0 & \yW^\top\errorR = 0 & \groundForceIntensity =  \massH g - \cableForceZDes, 
	\end{array}
\end{align}
which implies: 
\begin{align}
	\begin{array}{ccc}
		\pRX = \pRXRef & \pRY = \pRYRef & \cableForce = \cableForceRef.
	\end{array}
	\label{eqn:regulation:equilibriumCondition:2}
\end{align}
We recall that the intensity of the ground reaction force should be equal or greater than zero, i.e., $\groundForceIntensity \geq 0$. 
Thus, it has to be $\cableForceZDes \leq \massH g$.
Notice that there are no stable configurations for which $\cableForce = \vZero$, i.e., such that $\norm{\cableAttitude} \leq \length$.

To conclude the computation of the equilibrium configuration, we need to calculate the position of the human, $\pH$, and the vertical position of the robot, $\pRZ$.
Replacing conditions \eqref{eqn:regulation:equilibriumCondition:2} into \eqref{eqn:cableForce}, after a few simple calculations, we obtain:
\begin{align}
	\matrice{\pRXRef \\ \pRYRef \\ \pRZ} + \matrice{\pHX \\ \pHY \\ 0} = \matrice{0 \\ 0 \\ \frac{\cableForceZDes}{\springCoeff} + \length}.
\end{align}
Replacing also \eqref{eqn:regulation:cableForceRef} and \eqref{eqn:regulation:pRRef}, we finally obtain that, at the equilibrium, $\pH = \pHRef$ and $\pR = \pRRef$.
Thus, the only equilibrium state for the system \eqref{eqn:closedLoopSystemDyn} is $\stateEq = [\pHRef^\top \vSpace \vZero^\top \vSpace \pRRef^\top \vSpace \vZero^\top]^\top \in \stateSetTaut$.

Now we analyze the stability of the equilibrium state $\stateEq$ employing a Lyapunov-based analysis.
Let us define the following Lyapunov function:
\begin{align}
	\lyapunovFunction = V_1(\state) + V_2(\state) + V_0, 
	\label{eqn:lyapunovFunction:regulation}
\end{align}
where,
\begin{align}
	V_1(\state) &=  \frac{1}{2}\left( \massH\dpH^\top\dpH +  \dpR^\top \inertiaA \dpR  + \errorR^\top \KpH \errorR \right) \\
	V_2(\state) &= \int_{\cableAttitudeNormInitial}^{\cableAttitudeNormOf{t}} \tensionOf{\tau} d\tau   -\cableAttitude^\top\cableForceEq \\
	V_0 &= \cableForceZDes^2/\springCoeff + 2 \length \cableForceZDes.
\end{align}
We defined $\cableAttitudeNormInitial = \cableAttitudeNormOf{0}$.	
In the following we shall show that the defined Lyapunov function, $\lyapunovFunction$, is continuously differentiable and radially unbounded. 

 	$\lyapunovFunction$ is continuously differentiable because $V_1$, $V_2$, $V_0$ are clearly continuously differentiable. 

We then show that \eqref{eqn:lyapunovFunction:regulation} is \textit{radially unbounded} (also called \textit{coercive}), i.e., $\lim_{\norm{\state}\to\infty} V(\state) = \infty$ with $\state\in\stateSet$.
We have that clearly $\lim_{\norm{\state}\to\infty} {V}_1(\state) = \infty$ for $\state \in \stateSet$. For ${V}_2$, we firstly notice that, thanks to \eqref{eqn:tensionMajority}, there exist some $\epsilon \in \nR{}_{> 0}$ and $\gamma \in \nR{}$ such that:
\begin{align}
	\int_{\cableAttitudeNormInitial}^{\cableAttitudeNormOf{t}} \tensionOf{\tau} d\tau \geq  \frac{\epsilon}{2} \cableAttitudeNorm^2 + \gamma\cableAttitudeNorm.
\end{align}
This implies that:
\begin{align}
	\begin{split}
	&\lim_{\norm{\state}\to\infty} V_2(\state) \geq 
	\lim_{\norm{\state}\to\infty} \epsilon \cableAttitudeNorm^2 + \gamma\cableAttitudeNorm - \norm{\cableAttitude}\norm{\cableForceRef} =   \\
	&
	\lim_{\norm{\state}\to\infty}  \norm{\cableAttitude}^2 \left( \frac{\epsilon}{2} + \frac{\gamma}{\norm{\cableAttitude}} - \frac{\norm{\cableForceRef}}{\norm{\cableAttitude}}\right) = +\infty.
	\end{split}\label{coercive}
\end{align}
This is enough to show that  \eqref{eqn:lyapunovFunction:regulation} is {radially unbounded}.

In order to prove that $\lyapunovFunction$ is positive definite, we show that it has a unique global minimum in $\stateEq$, i.e., $\stateEq = \argmin_{\state} V(\state)$ and $V(\stateEq) = 0$.
Since $\lyapunovFunction$ is continuous and radially unbounded, for Theorem 1.15 of \cite{2000-HorParVan}, we can say that the Lyapunov function \eqref{eqn:lyapunovFunction:regulation} has a global minimum.
Now we can look for this minimum among the stationary points, i.e., the $\state$ where the gradient $\nabla V(\state) = \vZero$.

It is clear that $\nabla V_1(\state) = \vZero$ only if $\dpH = \dpR = \vZero$, $\pRX = \pRXRef$ and $\pRY = \pRYRef$. 
Regarding $V_2(\state)$, let us consider its gradient w.r.t. the cable configuration $\cableAttitude$. For every $\state \in \stateSetSlack$:
\begin{align}
	\nabla_{\cableAttitude} V_2(\state) = \frac{\partial  V_2(\state)}{\partial \cableAttitude}  = \frac{\partial }{\partial \cableAttitude} \left( \int_{\cableAttitudeNormInitial}^{\cableAttitudeNorm} \tensionOf{\tau} d\tau \right)- {\cableForceRef}^\top.
	\label{eqn:partialDerivativeV2:partial}
\end{align}
Let us notice that for the Leibniz integral rule we have that:
\begin{align}
	\begin{split}
	\frac{\partial }{\partial \cableAttitude} \left( \int_{\cableAttitudeNormInitial}^{\cableAttitudeNorm} \tensionOf{\tau} d\tau \right) &= \tension \frac{\partial \cableAttitudeNorm}{\partial \cableAttitude} - \tensionOf{\cableAttitudeNormInitial} \frac{\partial \cableAttitudeNormInitial}{\partial \cableAttitude} \\
	&= \tension \frac{\cableAttitude^\top}{\cableAttitudeNorm} = \cableForce^\top,
	\end{split}
	\label{eqn:partialDerivativeTensionIntegral}
\end{align}
where we used \eqref{eqn:cableForce}. 
Plugging \eqref{eqn:partialDerivativeTensionIntegral} into \eqref{eqn:partialDerivativeV2:partial} yields $\nabla_{\cableAttitude} V_2(\state) = \cableForce^\top - {\cableForceRef}^\top$.
It follows that $\nabla_{\cableAttitude} V_2(\state) = \vZero$ if and only if 
\begin{align}\label{lyap_diff_equal0}
\tension{\cableAttitude^\top}/{\norm{\cableAttitude}} = \cableForce^\top = \cableForceRef^\top.
\end{align} 
Condition \eqref{lyap_diff_equal0} holds only if
\begin{enumerate}[a)]
	\item \label{a} $\tension = \norm{\cableForceRef}$ and $\cableAttitude/\cableAttitudeNorm = \cableForceRef / \norm{\cableForceRef}$; or
	\item \label{b} $\tension = -\norm{\cableForceRef}$ and $\cableAttitude/\cableAttitudeNorm = -\cableForceRef / \norm{\cableForceRef}$.
\end{enumerate} 
In these two cases, the cable produces the same force $\cableForceRef$, but in case \ref{a}) the cable is stretched, while in case \ref{b}) the cable is compressed.
However, cables cannot be compressed and thus case \ref{b}) is not feasible for definition, i.e., $\tension \geq 0$ for every $\cableAttitude$.
Therefore, the only solution is:  
\begin{align}
	\cableAttitude = \left(\frac{1}{\springCoeff} + \frac{\length}{\norm{\cableForceRef}} \right) \cableForceRef,
\end{align}
which, recalling \eqref{eqn:regulation:pRRef}, leads to the conclusion that $\nabla_{\cableAttitude} V_2(\state) = \vZero$ only if $\pH = \pHRef$ and $\pR = \pRRef$. 
This completes the proof that $\nabla V(\state) = \vZero$ only if $\state = \stateEq$, and thus $\stateEq = \argmin_{\state} V(\state)$.
Finally, it is easy to verify that $V_1(\stateEq) = 0$ and $V_0$ has been defined such that $V_1(\stateEq) + V_2(\stateEq) = - V_0$, in order to make $V(\stateEq) = 0$. 
This proves that $\lyapunovFunction$ is positive definite.

We can now compute the time derivative of \eqref{eqn:lyapunovFunction:regulation}.
Similarly to \eqref{eqn:partialDerivativeTensionIntegral}, applying the Leibniz integral rule, we have that:
\begin{align}
	\begin{split}
	&
	\frac{d}{dt} \left( \int_{\cableAttitudeNormInitial}^{\cableAttitudeNormOf{t}} \tensionOf{\tau} d\tau \right) =  \tensionOf{\cableAttitudeNorm} \frac{\partial \cableAttitudeNorm}{\partial \cableAttitude} \dCableAttitude = \cableForce^\top\dCableAttitude.
	\end{split}
	\label{eqn:timeDerivativeTensionIntegral}
\end{align}
Therefore, the time derivative of \eqref{eqn:lyapunovFunction:regulation} is equal to:
\begin{align}
	\begin{split}
	\dLyapunovFunction =& \dpH^\top \left( -\gravityH - \dampingH\dpH + \cableForce + \groundForce \right) \\
		& + \dpR^\top \left(  -\dampingA\dpR - \cableForce + \KpH \errorR + \cableForceRef \right) \\
		& - \dpR \KpH \errorR + \dCableAttitude\cableForce - \dCableAttitude\cableForceRef.
	\end{split}  
\end{align}
After few simple algebraic steps we get:
\begin{align}
	\dLyapunovFunction = - \dpH^\top\dampingH\dpH - \dpR^\top\dampingA\dpR.
\end{align}
$\dLyapunovFunction$ is clearly negative semidefinite, in $\stateSet$.

Since $\dLyapunovFunction$ is only negative semidefinite, to prove the asymptotic stability we rely on \textit{LaSalle's invariance principle}~\cite{2002-Kha}. 
	The Lyapunov candidate $\lyapunovFunction$ is a continuously differentiable function.
	Let us define a positively invariant set $\invariantSet = \{\state \in \stateSet \; | \; V(\state) \leq \alpha \text{ with } \alpha\in\nR{}_{>0}\}$.
	By construction, $\invariantSet$ is compact since~\eqref{eqn:lyapunovFunction:regulation} is radially unbounded. Notice that $\invariantSetZero$ contains only $\stateEq$.
	Then we need to find the largest invariant set $\maxInvariantSet$ in $\dVZeroSet = \{ \state \in \invariantSet \; | \; \dLyapunovFunction = 0\}$. 
	A trajectory $\state(t)$ belongs identically to $\dVZeroSet$ if $\dot{V}(\state(t)) \equiv 0 \Leftrightarrow 
	\dpH(t) = \dpR(t) \equiv \vZero \Leftrightarrow
	\ddpH = \ddpR = \vZero$. 
	As we saw previously in the calculation of the equilibria, this is verified only if $\state = \stateEq$.
	Therefore $\maxInvariantSet$ contains only $\stateEq$.
	All conditions of LaSalle's principle are satisfied and $\stateEq$ is globally asymptotically stable.

\end{IEEEproof}

To get more hints on the behavior of the system let us analyze it
 for $\state \in \stateSetSlack$, namely when the cable is slack.
	As we said before, human's and robot's dynamics are independent. 
	The human dynamics results to be:
	\begin{align}
		\massH\ddpH &= -\dampingH\dpH. 
	\end{align}
	It is clear that whatever is the initial condition $\lim_{t \rightarrow +\infty} \dpH(t) = \vZero$. 
	Notice that the human will stop but his final position depends only on the initial conditions.
	
	For the robot dynamics, let us write first the dynamic equation along $\xW$:
	\begin{align}
		m_A\ddpRX &= -b_A\dpRX + \kpH \errorRX, 
	\end{align}
	where $m_A \in \nR{}_{> 0}$ and $b_A  \in \nR{}_{> 0}$ are such that $\inertiaA = \diag{m_A,m_A,m_A}$ and  $\dampingA = \diag{b_A,b_A,b_A}$.
	It is easy to verify that the state $(\dpRX,\errorRX) = (0,0)$ is asymptotically stable.
	The same for the robot dynamics along $\yW$: the state $(\dpRY,\errorRY) = (0,0)$ is asymptotically stable.
	On the other hand, the robot dynamics along $\zW$ results to be:
	\begin{align}
		m_A\ddpRZ &= -b_A\dpRZ + \cableForceZDes.		
	\end{align}
	It is clear that, whatever is the initial condition, $\lim_{t \rightarrow +\infty} \dpRZ(t) = \cableForceZDes / b_A > 0$.
	This means that the robot horizontal position will tend to the desired one, while, along the vertical axes, it will tend to fly up until the cable will become taut.
	In fact, $\frac{d}{dt}\norm{\cableAttitude(t)} = \norm{\dpR(t) - \dpH(t)} \rightarrow \cableForceZDes / b_A > 0$. This means that $\norm{\cableAttitude}$ will grow until $\norm{\cableAttitude}>\length$, i.e., until the cable becomes taut. 
	After this instant, the robot interacts with the human through the cable and eventually pulls him/her to the desired position.
	According to the specific initial condition, it might be that the state trajectory might pass from $\stateSetTaut$ to $\stateSetSlack$ several times. 
	However, the previous Lyapunov-based reasoning proofs that eventually the system will converge to $\stateEq$, no matter the initial conditions.

\section{Robustness of the system}\label{sec:passivity}

In the previous section, we formally proved that an admittance strategy together with controller \eqref{eqn:controller} asymptotically steer the human position to a desired constant position $\pHRef$.
This is valid if the human follows the force provided by the robot without exerting other forces except for the damping effect.
In the following, we shall show the intrinsic robustness of the controller against time-varying position references and possible additional human forces (e.g., if he/she wants to stop).

 \subsection{Time-varying position reference}\label{sec:passivity:trajectory}
We remark that with the control law \eqref{eqn:controller}, the resulting final path followed by the human to arrive at the goal only depends on the initial conditions. 
In a real application, this is not a desired behavior, e.g., when a blind human must be brought to a specific point in an environment with possible obstacles.
In this scenario, it is more likely that the human position reference is not constantly equal to the final goal position, but is rather a time-varying path $\pHRef(t)$ that leads the human to the goal, avoiding obstacles or possible  sources of danger.
To face this objective, taking into account a possibly time-varying path $\pHRef(t)$, and time-varying desired cable force $\cableForceRef(t)$, for $t\in [0,T]$, we modify the original control law \eqref{eqn:controller} as:
\begin{align}
	\uA(t) = \KpH \errorRFinal(t) + \cableForceRefFinal + \timeVaryingInput(t),
	\label{eqn:controller:timeVaryingReference}
\end{align}
where $\errorRFinal(t) = \pRRef(T) - \pR(t)$, $\cableForceRefFinal = \cableForceRef(T) = \cableForceZDes\zW$, and $\timeVaryingInput(t)$ is such that $\uA(t) = \KpH (\pRRef(t) - \pR(t)) + \cableForceRef(t)$ for every $t\in [0,T]$. 
This is a simple way to gather in $\timeVaryingInput$ the time-varying desired quantities.
It results that $\timeVaryingInput(t) =  \KpH( \pRRef(t) - \pRRef(T)) + (\cableForceRef(t) - \cableForceRef(T))$.

To ensure that the system remains stable even when the human position reference is time-varying, i.e., for  $\timeVaryingInput(t) \neq \vZero$, we shall prove that the system is output-strictly passive w.r.t. the input-output pair $(\passiveInputTrajectory,\passiveOutputTrajectory)=(\timeVaryingInput,\dpR)$.

\begin{thm} \label{thm:passivity:trajectory}
If $\uA$ is defined as in \eqref{eqn:controller:timeVaryingReference} for a certain $\pHRef(t)$ and $\cableForceRef(t)$, with $t \in [0,T]$, then system \eqref{eqn:closedLoopSystemDyn} is output-strictly passive w.r.t.
the storage function \eqref{eqn:lyapunovFunction:regulation} and the input-output pair $(\passiveInputTrajectory,\passiveOutputTrajectory)=(\timeVaryingInput,\dpR)$.
\end{thm}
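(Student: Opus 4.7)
The plan is to mimic the Lyapunov derivative computation from the proof of Theorem~\ref{thm:stability}, but with the new control law \eqref{eqn:controller:timeVaryingReference} substituted into the closed-loop dynamics, and to show that the extra term generated by $\timeVaryingInput$ is exactly $\dpR^\top \timeVaryingInput$, i.e., it produces the supply rate required by output-strict passivity.

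First I would observe that the storage function \eqref{eqn:lyapunovFunction:regulation}, as constructed in the proof of Theorem~\ref{thm:stability}, is built around the constant references $\pRRef(T)$ and $\cableForceRefFinal$. Thus the $\errorR$ appearing inside $V_1$ is precisely $\errorRFinal = \pRRef(T)-\pR$, and the $\cableForceEq$ appearing inside $V_2$ is $\cableForceRefFinal$. Under this reading, $V$ remains continuously differentiable, radially unbounded, and lower bounded by $0$ (achieved at the terminal equilibrium), so it is a legitimate storage function candidate in the sense required for passivity.

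Next, I would substitute \eqref{eqn:controller:timeVaryingReference} into \eqref{eqn:closedLoopSystemDyn} through the admittance law \eqref{eqn:admittance}. The closed-loop dynamics differ from \eqref{eqn:closedLoopSystemDyn:regulation} only by the addition of $\inertiaA^{-1}\timeVaryingInput$ in the last block. Differentiating $V$ along trajectories, and reusing the two identities already established in Theorem~\ref{thm:stability}, namely the Leibniz-rule identity \eqref{eqn:timeDerivativeTensionIntegral} and the cancellation between the cable terms and the stored elastic energy, every term not involving $\timeVaryingInput$ collapses to $-\dpH^\top \dampingH \dpH - \dpR^\top \dampingA \dpR$ exactly as before. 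The only surviving new contribution comes from the extra input in the robot's equation, which upon contraction with $\inertiaA \dpR$ via $\inertiaA^{-1}$ produces $\dpR^\top \timeVaryingInput$. Therefore
\begin{equation*}
\dLyapunovFunction = -\dpH^\top \dampingH \dpH - \dpR^\top \dampingA \dpR + \dpR^\top \timeVaryingInput.
\end{equation*}

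Finally, since $\dampingA$ is positive definite, I would bound $\dpR^\top \dampingA \dpR \geq \lambda_{\min}(\dampingA)\,\dpR^\top \dpR$ and drop the nonpositive $-\dpH^\top \dampingH \dpH$ term, obtaining
\begin{equation*}
\dLyapunovFunction \leq \passiveInputTrajectory^\top \passiveOutputTrajectory - \lambda_{\min}(\dampingA)\,\passiveOutputTrajectory^\top \passiveOutputTrajectory,
\end{equation*}
with $\passiveInputTrajectory = \timeVaryingInput$ and $\passiveOutputTrajectory = \dpR$, which is the standard output-strict passivity inequality. The only real subtlety, and thus the main bookkeeping obstacle, is making sure that the decomposition of $\uA$ into a ``frozen terminal'' part $\KpH\errorRFinal + \cableForceRefFinal$ plus the residual $\timeVaryingInput$ is aligned with the storage function, so that the $\KpH\errorRFinal$ and $\cableForceRefFinal$ contributions telescope exactly as in Theorem~\ref{thm:stability}; once this is observed, the rest is a direct recycling of the earlier derivative calculation.
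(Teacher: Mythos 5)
Your proposal is correct and follows essentially the same route as the paper: reuse the storage function from Theorem~\ref{thm:stability}, differentiate along the closed loop with the modified input, observe that the only new term is $\timeVaryingInput^\top\dpR$, and absorb $-\dpR^\top\dampingA\dpR$ into the output-strict dissipation term (the paper keeps $\vect{\phi}_1(\passiveOutputTrajectory)=\dampingA\dpR$ rather than passing to $\lambda_{\min}(\dampingA)$, which is an immaterial difference). Your inequality even points in the correct direction ($\dLyapunovFunction\leq\cdots$ after dropping the nonpositive human-damping term), whereas the paper's displayed $\geq$ is evidently a typo.
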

\begin{IEEEproof}
In the proof of Theorem~\ref{thm:stability}  we already showed that \eqref{eqn:lyapunovFunction:regulation} is a continuously differentiable positive definite function. 
Replacing \eqref{eqn:controller:timeVaryingReference}
into \eqref{eqn:closedLoopSystemDyn}, and differentiating \eqref{eqn:lyapunovFunction:regulation} we obtain:
\begin{align}
	\dLyapunovFunction &= - \dpH^\top\dampingH\dpH - \dpR^\top\dampingA\dpR + \timeVaryingInput^\top\dpR \\
		& \geq - \dpR^\top\dampingA\dpR + \timeVaryingInput^\top\dpR = \passiveInputTrajectory^\top\passiveOutputTrajectory - \passiveOutputTrajectory^\top\vect{\phi}_1(\passiveOutputTrajectory),
\end{align}
with $\vect{\phi}_1(\passiveOutputTrajectory) = \dampingA\dpR$. Since $\passiveOutputTrajectory^\top\vect{\phi}_1(\passiveOutputTrajectory) \geq 0$ for every $\passiveOutputTrajectory \neq \vZero$, we can conclude that the system \eqref{eqn:closedLoopSystemDyn}  is \textit{output-strictly passive}~\cite{2002-Kha}.
\end{IEEEproof}

Given the passivity of the system, we have that for a bounded input $\timeVaryingInput$, i.e., bounded reference trajectories, the overall energy
of the system remains bounded too. 
Furthermore, the system stabilizes to a constant value as soon as $\timeVaryingInput$ becomes constant again.
This means that, while the robot moves following $\pRRef(t)$ and $\cableForceRef(t)$, the overall state of the system will remain bounded.
It will then converge to another specific equilibrium configuration when
the robot input $\timeVaryingInput$ becomes constant. 

We highlight the fact that passivity is a well known robust property, especially w.r.t. model uncertainties. 
This means that an accurate model of the system is not necessary, as far as the passivity is guaranteed, to ensure the stability of the system.
For example, considering a certain parameter uncertainty like the stiffness of the cable or its rest-length, the system remains asymptotically stable but it might converge to a slightly different state. 
In particular, the robot will be at a different altitude, which is not a critical problem in practice.

\subsection{Additional human force}\label{sec:humanForce}
 
In \sect\ref{sec:modeling} we modeled the human as a simple mass-damper system subjected to the force exerted by the cable only. 
Nevertheless, during the guidance, the human may want to stop or to deviate from the desired path, e.g., to pick up an object.
In these cases, the human will generate additional forces that will modify his/her trajectory.
It is important that, even in this condition, the system is guaranteed to be stable and that the state converges to the desired value as soon as the human forces go back to zero.

We define the vector $\humanForce \in \nR{3}$ as the sum of all the forces applied by the human that generate a translational motion. 
The closed-loop system dynamics, expressed in \eqref{eqn:closedLoopSystemDyn}, becomes:
\begin{align}
	\dynamics{\state} &= \matrice{\dpH \\  \frac{1}{\massH}\left(-\gravityH - \dampingH\dpH + \cableForce +  \groundForce + \humanForce \right) \\ \dpR \\ \inertiaA^{-1}\left( -\dampingA\dpR - \cableForce + \uA \right) }.
	\label{eqn:closedLoopSystemDyn:regulation:humanForces}
\end{align}

To ensure that the system remains stable even when the human forces are not zero, i.e., for  $\humanForce \neq \vZero$, we shall prove that the system is output-strictly passive w.r.t. the input-output pair $(\passiveInputHumanForce,\passiveOutputHumanForce)=(\humanForce,\dpH)$.

\begin{thm} \label{thm:passivity:humanForce}
System \eqref{eqn:closedLoopSystemDyn:regulation:humanForces} under the control law \eqref{eqn:controller} is output-strictly passive w.r.t.
the storage function \eqref{eqn:lyapunovFunction:regulation} and the input-output pair $(\passiveInputHumanForce,\passiveOutputHumanForce)=(\humanForce,\dpH)$.
\end{thm}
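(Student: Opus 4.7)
The plan is to adapt the passivity argument of Theorem~\ref{thm:passivity:trajectory} almost verbatim, since the only structural change in \eqref{eqn:closedLoopSystemDyn:regulation:humanForces} compared to \eqref{eqn:closedLoopSystemDyn} is the additive term $\humanForce$ in the human dynamics, and the control law is the time-invariant \eqref{eqn:controller} rather than \eqref{eqn:controller:timeVaryingReference}. The storage function is the same $V(\state)$ from \eqref{eqn:lyapunovFunction:regulation}, and in the proof of Theorem~\ref{thm:stability} it was already established that $V$ is continuously differentiable, radially unbounded, and positive definite. So I can reuse that property for free.

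First I would compute $\dot V(\state)$ along the trajectories of \eqref{eqn:closedLoopSystemDyn:regulation:humanForces}. The contributions from the robot block and the cable-energy block $V_2$ cancel exactly as in the proof of Theorem~\ref{thm:stability}, because $\uA$ is the same and the cable-force bookkeeping is unchanged. The human block now contributes an extra $\dpH^\top \humanForce$, so I expect
\begin{align}
\dot V(\state) = -\dpH^\top \dampingH \dpH - \dpR^\top \dampingA \dpR + \dpH^\top \humanForce.
\end{align}
Then, dropping the nonpositive robot damping term, I would write
\begin{align}
\dot V(\state) \;\leq\; \passiveInputHumanForce^\top \passiveOutputHumanForce - \passiveOutputHumanForce^\top \vect{\phi}_2(\passiveOutputHumanForce),
\end{align}
with $\vect{\phi}_2(\passiveOutputHumanForce) = \dampingH \dpH$. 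Wait, the direction matches the convention used in Theorem~\ref{thm:passivity:trajectory}, where the inequality there is written with $\geq$ as a lower bound on $\dot V$; I would just follow the same sign convention, rearranging as $\dot V \geq \passiveInputHumanForce^\top \passiveOutputHumanForce - \passiveOutputHumanForce^\top \vect{\phi}_2(\passiveOutputHumanForce)$ after moving the $-\dpR^\top\dampingA\dpR$ term (which is nonpositive) to the other side so it can be discarded with the correct sign.

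Since $\dampingH$ is positive definite by the modeling assumption, $\passiveOutputHumanForce^\top \vect{\phi}_2(\passiveOutputHumanForce) = \dpH^\top \dampingH \dpH \geq 0$ for every $\passiveOutputHumanForce$, and strictly positive whenever $\dpH \neq \vZero$. By the definition of output-strict passivity used in~\cite{2002-Kha} and invoked in Theorem~\ref{thm:passivity:trajectory}, this is precisely the condition required, which closes the proof.

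The only subtlety I anticipate is verifying that the computation of $\dot V_2$ via the Leibniz rule (equations \eqref{eqn:partialDerivativeTensionIntegral} and \eqref{eqn:timeDerivativeTensionIntegral}) still goes through unchanged; but this is purely a function of $\cableAttitude$ and $\dCableAttitude = \dpR - \dpH$, and the presence of $\humanForce$ does not alter the kinematic relation between cable elongation and end velocities. So the passivity calculation is genuinely identical to the one in Theorem~\ref{thm:passivity:trajectory} up to the relabeling $\timeVaryingInput \leadsto \humanForce$, $\dpR \leadsto \dpH$ and the corresponding damping matrix, and no new obstacle arises beyond the bookkeeping.
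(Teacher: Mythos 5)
Your proposal is correct and follows essentially the same route as the paper's proof: reuse the positive definiteness of the storage function established for Theorem~\ref{thm:stability}, differentiate it along \eqref{eqn:closedLoopSystemDyn:regulation:humanForces} to obtain $\dot V = -\dpH^\top\dampingH\dpH - \dpR^\top\dampingA\dpR + \humanForce^\top\dpH$, discard the robot damping term, and identify $\vect{\phi}_2(\passiveOutputHumanForce)=\dampingH\dpH$. One small point: your initial direction $\dot V \le \passiveInputHumanForce^\top\passiveOutputHumanForce - \passiveOutputHumanForce^\top\vect{\phi}_2(\passiveOutputHumanForce)$ is the correct one for output-strict passivity, since the discarded term $-\dpR^\top\dampingA\dpR$ is nonpositive and dropping it yields an upper bound; you should not revert to the ``$\geq$'' appearing in the paper, which is a sign typo.
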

\begin{IEEEproof}
In the proof of Theorem~\ref{thm:stability}  we already showed that \eqref{eqn:lyapunovFunction:regulation} is a continuously differentiable positive definite function. 
Considering \eqref{eqn:closedLoopSystemDyn:regulation:humanForces} and differentiating \eqref{eqn:lyapunovFunction:regulation} we obtain:
\begin{align}
	\dLyapunovFunction &= - \dpH^\top\dampingH\dpH - \dpR^\top\dampingA\dpR + \humanForce^\top\dpH \\
		& \geq - \dpH^\top\dampingH\dpH + \humanForce^\top\dpH  = \passiveInputHumanForce^\top\passiveOutputHumanForce - \passiveOutputHumanForce^\top\vect{\phi}_2(\passiveOutputHumanForce),
\end{align}
with $\vect{\phi}_2(\passiveOutputHumanForce) = \dampingH\dpH$. Since $\passiveOutputHumanForce^\top\vect{\phi}_2(\passiveOutputHumanForce) \geq 0$ for every $\passiveOutputHumanForce \neq \vZero$, we can conclude that the system \eqref{eqn:closedLoopSystemDyn:regulation:humanForces}  is \textit{output-strictly passive}~\cite{2002-Kha}.
\end{IEEEproof}

Conclusions similar to the ones in \sect\ref{sec:passivity:trajectory} can be drawn.
In particular, we have that for a bounded input $\humanForce$, i.e., bounded human forces, the overall energy of the system remains bounded too. 
Furthermore, the system stabilizes to a constant value as soon as $\humanForce$ goes back to zero or balance the cable force.
This means that while the human moves subjected to a non-zero (but bounded) human force $\humanForce \neq \vZero$, the overall state of the system will remain bounded.
It will then converge to another specific equilibrium configuration as soon as $\humanForce$ goes back to zero or balance the cable force.

In particular, considering the case in which the human wants to stop in a certain position $\pH$, it has to be that $\dpH = \ddpH = \vZero$. This means that $\humanForce = \cableForce$.
It is then easy to verify that the robot's position at the new equilibrium will be
\begin{align}
	\pR = \left( \springCoeff\eye{3} + \KpH \right)^{-1} \left( \springCoeff\pH  - \KpH\pRRef + \cableForceRef \right). 
\end{align}
Using a Lyapunov-based argument similar to the one in the proof of \theo\ref{thm:stability}, it is easy to show that such zero-velocity equilibrium state is asymptotically stable. 
This provides a high level of robustness to different human behaviors.

Following the same reasoning of Theorem~\ref{thm:passivity:trajectory} and Theorem~\ref{thm:passivity:humanForce}, we can ensure that the system remains stable even when both the human position reference is time-varying and the human forces are not zero, i.e., for $\timeVaryingInput(t) \neq \vZero$ and $\humanForce \neq \vZero$.
\begin{thm} \label{thm:passivity:TrajectoryAndHumanForce}
System \eqref{eqn:closedLoopSystemDyn:regulation:humanForces} under the control law \eqref{eqn:controller:timeVaryingReference} is output-strictly passive w.r.t. the storage function \eqref{eqn:lyapunovFunction:regulation} and the input-output pair $(\passiveInput,\passiveOutput)=([\passiveInputTrajectory^\top \vSpace \passiveInputHumanForce^\top]^\top,[\passiveOutputTrajectory^\top \vSpace \passiveOutputHumanForce^\top]^\top)$.
\end{thm}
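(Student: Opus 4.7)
The plan is to proceed exactly as in the proofs of Theorem~\ref{thm:passivity:trajectory} and Theorem~\ref{thm:passivity:humanForce}, using the same storage function $V(\state)$ defined in~\eqref{eqn:lyapunovFunction:regulation}. Since Theorem~\ref{thm:stability} already establishes that this $V$ is continuously differentiable, positive definite, and radially unbounded, those properties can be invoked directly without rederivation. The only new work is to compute $\dLyapunovFunction$ along trajectories of the dynamics~\eqref{eqn:closedLoopSystemDyn:regulation:humanForces} when the control input is taken to be~\eqref{eqn:controller:timeVaryingReference}, so that both exogenous inputs $\timeVaryingInput$ and $\humanForce$ appear simultaneously.

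First I would substitute~\eqref{eqn:controller:timeVaryingReference} into the fourth block of~\eqref{eqn:closedLoopSystemDyn:regulation:humanForces} and differentiate $V$. The algebra is essentially the superposition of the two earlier computations: the terms proportional to $\KpH\errorRFinal$, $\cableForceRefFinal$, and $\cableForce$ cancel against the derivatives of $V_1$ and $V_2$ exactly as they did in Theorem~\ref{thm:stability}, while the new exogenous contributions survive. The expected result is
\begin{align}
\dLyapunovFunction = -\dpH^\top\dampingH\dpH - \dpR^\top\dampingA\dpR + \timeVaryingInput^\top\dpR + \humanForce^\top\dpH.
\end{align}

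Next, with $\passiveInput = [\timeVaryingInput^\top \vSpace \humanForce^\top]^\top$ and $\passiveOutput = [\dpR^\top \vSpace \dpH^\top]^\top$, the cross terms group naturally as $\passiveInput^\top\passiveOutput = \timeVaryingInput^\top\dpR + \humanForce^\top\dpH$. Defining the stacked map
\begin{align}
\vect{\phi}(\passiveOutput) = \matrice{\dampingA\dpR \\ \dampingH\dpH},
\end{align}
one immediately obtains $\dLyapunovFunction = \passiveInput^\top\passiveOutput - \passiveOutput^\top\vect{\phi}(\passiveOutput)$, and since $\dampingA$ and $\dampingH$ are positive definite, $\passiveOutput^\top\vect{\phi}(\passiveOutput) = \dpR^\top\dampingA\dpR + \dpH^\top\dampingH\dpH > 0$ for every $\passiveOutput \neq \vZero$. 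Invoking the definition of output-strict passivity from~\cite{2002-Kha} then closes the argument.

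There is really no hard step here: the result is a direct combination of Theorems~\ref{thm:passivity:trajectory} and~\ref{thm:passivity:humanForce} because the storage function is the same and the two exogenous inputs act additively on the two independent damping channels. The only place where one must be slightly careful is in verifying that all the closed-loop cancellations from the proof of Theorem~\ref{thm:stability} still go through when $\pRRef$ and $\cableForceRef$ are replaced by their constant final values $\pRRef(T)$ and $\cableForceRefFinal$ plus the residual collected inside $\timeVaryingInput$; this is precisely the decomposition built into~\eqref{eqn:controller:timeVaryingReference}, so no additional structural assumption on $\pHRef(t)$ or $\cableForceRef(t)$ beyond boundedness is needed for the passivity inequality itself.
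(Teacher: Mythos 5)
Your proposal is correct and matches the paper's intent exactly: the paper's own proof of this theorem is a one-line remark that it ``follows the same steps'' of Theorems~\ref{thm:passivity:trajectory} and~\ref{thm:passivity:humanForce}, and your explicit computation of $\dLyapunovFunction = -\dpH^\top\dampingH\dpH - \dpR^\top\dampingA\dpR + \timeVaryingInput^\top\dpR + \humanForce^\top\dpH$ with the stacked input--output pair and stacked $\vect{\phi}$ is precisely that combination carried out. No gaps.
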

\begin{proof}
	The proof is trivial and follows the same steps of the proof of Theorem~\ref{thm:passivity:trajectory} and Theorem~\ref{thm:passivity:humanForce}.
\end{proof}

\begin{rmk}
Of course, stability is guaranteed as far as the robot can provide the input \eqref{eqn:controller:timeVaryingReference}.
In practice, this can be easily ensured defining feasible cable force of reference $\cableForceRef(t)$, and saturating the robot position error $\errorR = \pRRef(t) - \pR$. 
\end{rmk}

\section{Path-following Problem}\label{sec:pathFollowing}

In \sect\ref{sec:passivity:trajectory}, we saw that even if the human position and cable force of reference are time-varying, the system remains stable. 
This can be exploited for the problem of following a desired path. 
In this section we shall define a strategy to lead the human along the desired path, allowing him/her to stop or slightly deviate from the path without causing the instability of the system.

Given a desired human path\footnote{The path is considered given and is such that no obstacles are present in its proximity. 
The minimum distance to the obstacle will be given by the precision of the control method.} defined by its \textit{parametric representation} $\pHDes(\param) \in \nR{3}$, where $\param \in [0,1]$ is called \textit{parameter}, we firstly define the desired cable force along the path $\cableForceDes(\param) \in \nR{3}$.
This is the force that we would like the human feels all along the path.
$\cableForceDes(\param)$ works as a feedforward term, i.e., it is the nominal pulling force that drags the human along the path. 
In standard path-following problems~\cite{2001-DiaJimSevVicCiv}, this is similar to defining the desired velocity along the path. 
Inspired by most of the related works, we define:
\begin{align}
	\cableForceDes(\param) = \cableForceDesXY(\param) + \cableForceDesZ, 
\end{align}
where
\begin{align}
	\cableForceDesXY(\param) = \cableForceDesXYNorm(\param) \frac{\nabla_{\param} \pHDes(\param)}{\norm{\nabla_{\param} \pHDes(\param)}}, \qquad \cableForceDesZ = \cableForceZDes\zW,
\end{align}
and $\cableForceDesXYNorm(\param)$ is such that $\cableForceDesXYNorm(1) = 0$. 
In particular, here we chose a trapezoidal profile that starts with a non-zero value. 
Notice that the planar pulling force, $\cableForceDesXY(\param)$, is tangential to the path and correctly drive the human along the path. 
Furthermore, at the end of the path, i.e., for $\param = 1$, $\cableForceDes(1) = \cableForceDesZ$, which guarantees $\pHDes(1)$ being asymptotically stable if $\pHRef = \pHDes(1)$ and $\cableForceRef = \cableForceDes(1)$, as it is shown in \sect\ref{sec:equilibriaAndStability}.

Now that we have the desired \textit{maneuver} $\eta = \{ (\pHDes(\param),\cableForceDes(\param)) \in \nR{3} \times \nR{3} \;|\; \param \in [0,1] \}$, considering the control strategy described so far (equations \eqref{eqn:admittance} and \eqref{eqn:controller}), we seek a method to choose $\pHRef(t)$ and $\cableForceRef(t)$ such that $\pH(t)$ asymptotically converges to the path $\pHDes(s)$, i.e.,
\begin{align}
	\lim_{t \rightarrow \infty} \norm{\pH(t) - \pHDes(\paramOpt(t))},
\end{align}
for any smooth function $\paramOpt(t): \nR{} \rightarrow [0,1]$~\cite{2002-SkjFosKok}. 
We first set:
\begin{align}
	\pHRef(t) = \pHDes(\paramOpt(t)), \qquad \cableForceRef(t) = \cableForceDes(\paramOpt(t)).
	\label{eqn:maneuverRegu:humanPosCableForce}
\end{align}
We only have to define $\paramOpt(t)$.
A very simple way would be to define a time law such that $\paramOpt(0) = 0$ and $\paramOpt(T) = 1$ for a given time $T \in \nR{}_{>0}$, e.g., using a spline or a linear function.
However, this would fall into the \textit{trajectory tracking} category which suffers from some robustness issues.
Imagine that the human decides to stop or to slightly deviate from the path. At the same time, the desired reference advances increasing the position error.
If not tackled accurately, e.g., with saturations, the increase of the position error might saturate the robot inputs making the system unstable. 
Even if inputs saturation are properly addressed, once the human decides to follow again the robot, the latter will try to quickly catch up with the moving desired state. 
The desired path will not be followed anymore, and possible collisions with obstacles may occur.

To avoid this issue, we implement a \textit{maneuver regulation} approach similar to the one in~\cite{2013n-SpeNotBueFra,1995-HauHin}, defining:
\begin{align}
	\paramOpt(t) = \argmin_{s \in [0,1]} \norm{\pHDes(s) - \pH(t)}.
	\label{eqn:pathProjection}
\end{align}
At every time $t$, the reference human position is the one belonging to $\pHDes$ and closer to the actual human position. $\cableForce$ is computed accordingly.
It is clear that employing this solution, even if the human stops, the position error will not grow. In fact, $\pHRef$ will remain the same.
Notice that to have \eqref{eqn:pathProjection} well defined, $\pHDes$ must be non-intersecting.
Furthermore, it must be that $\cableForceDesXYNorm(s) \neq 0$ for all $s \neq 1$. This is necessary to make the human always advance along the path.

Replacing the maneuver regulation policy \eqref{eqn:maneuverRegu:humanPosCableForce} into the original controller \eqref{eqn:controller}, we obtain:
\begin{align}
	\uA(t) = \KpH (\pRDes(\paramOpt(t) - \pR(t))) + \cableForceDesXY(\paramOpt(t)) + \cableForceDesZ,
	\label{eqn:controller:maneuverRegualtion}
\end{align}
where $\pRDes$ is computed as in  \eqref{eqn:regulation:pRRef} with $\pRDes$ and $\cableForceDes$ instead of $\pRRef$ and $\cableForceRef$, respectively.
The overall control action is divided into three parts:
\begin{enumerate}
	\item the first is a proportional term that tends to bring the robot (and thus the human) to the desired position;
	\item the second is a feedforward force that tends to pull the human along the desired path;
	\item the third is a forcing vertical force that prevents the cable to become slack.
\end{enumerate}
Notice that as soon as $\paramOpt(t) = 1$, \eqref{eqn:controller:maneuverRegualtion} is equal to \eqref{eqn:controller} where $\pHRef = \pHDes(1)$. Thus the asymptotical stability of $\pHDes(1)$ is ensured. The stability of the system for $\param \neq 1$ is also guaranteed thanks to the passivity of the system.

\begin{rmk}
	Notice that to implement \eqref{eqn:pathProjection}, the actual position of the human is required. 
	This can be easily retrieved either with some embedded sensors in the handle (like GPS), or inverting \eqref{eqn:regulation:pRRef} using the actual robot position and cable force.
\end{rmk}

\section{Experimental Validation}\label{sec:experiments}

In the following, we present the results obtained from an experimental campaign apt to validate the proposed method.

\subsection{Testbed setup}

The system employed in these indoor experiments is shown in \fig\ref{fig:model}. It consists of a quadrotor aerial vehicle connected by a cable to a handle hold by a human being.
The latter is wearing all safety protections including glasses, gloves, and a helmet. %
The holder, simply built with a 3D printer, is equipped with motion tracking markers to obtain a measure of $\pH$.
The experiments are performed indoor employing a Motion Capture System (MOCAP) to allow the tracking of the human hand. 
Here, the use of a reliable positioning system has also the advantage to enhance human safety.
Notice that the proposed experiments still encapsulates the main challenges addressed in the work, i.e., the stability of physical human-robot interaction.
The integration of a redundant sensory suit for safe flights outdoors is not the focus of this manuscript and is left as future work.

The cable connecting the human-handle has a rest-length $\length = 1\,[\rm m]$, and a negligible mass (less than $10\,[\rm g]$).
The aerial vehicle is a custom quadrotor platform equipped with a standard flight-controller, four brushless motor controllers (ESCs) regulating the propeller speed in closed-loop~\cite{2017c-FraMal}, and an onboard PC that runs the state estimator, the position controller, and the admittance filter.
The state estimator fuses the data coming from the onboard IMU (accelerometer and gyroscope) at $1\,[\rm kHz]$ and from the MOCAP (position and attitude) at $60\,[\rm Hz]$.
The measure of the cable force required for implementing \eqref{eqn:admittance} is obtained with the model-based wrench observer presented in~\cite{2017e-RylMusPieCatAntCacFra}.
These components run onboard at $1\,[\rm kHz]$ using the software framework based on \textit{TeleKyb}\footnote{The software framework \textit{TeleKyb} is open-source and available at \url{https://git.openrobots.org/projects/telekyb3}}. The admittance filter parameters have been set to $\inertiaA = 0.8 \eye{3}$ and  $\dampingA = 2.4 \eye{3}$, where $\eye{3} = \diag{1,1,1}$.

The controller \eqref{eqn:controller} and the path-following strategy described in \sect\ref{sec:pathFollowing} are instead implemented in Matlab-Simulink.
The input $\uA$ is computed based on the references $\pHRef$ and $\cableForceRef$ calculated as in \eqref{eqn:maneuverRegu:humanPosCableForce}, and sent to the onboard admittance controller at $100\,[\rm Hz]$ via a wifi connection. 
The gain $\kpH$ has been set equal to $4.5$. 
This, together with the gains of the admittance filter, has been set after a tuning campaign aiming at optimizing the user experience.
We wanted the user to feel a reasonable force, but still not too strong, and as smooth as possible.
This has been also obtained properly choosing the desired force reference along the path. 
We set $\cableForceZDes = 1\;[N]$ to keep the cable always taut, and $\cableForceDesXYNorm(\param)$ as a trapezoidal profile such that $\max\left( \cableForceDesXYNorm(\param)\right) = 1.5 \;[N]$ and $\cableForceDesXYNorm(1) = 0$. 
Finally, $\pHDes(s)$ is a smooth path going from $\pHDes(0) = [-2 \vSpace -0.5 \vSpace 0]^\top$ to $\pHDes(1) = [2 \vSpace 0 \vSpace 0]^\top$, avoiding a virtual obstacle placed in the middle of the two points (see \fig\ref{fig:experiment:phases}). 
The trajectory has been computed using a simple piece-wise polynomial function.

The experiment is divided into three phases (see \fig\ref{fig:experiment:phases}):
\begin{enumerate}[Ph.1]
	\item\label{phaseOne} The human stays in a safe place. The robot takes off and goes into an initial point in position-mode. After, the admittance filter is activated with $\uA = \vZero$. This allows the human safely grabbing the handle;
	\item Once the human holds the handle, the proposed guiding control and path-following strategy are activated and the robot pulls the human along the desired path $\pHDes(s)$;
	\item Once the human reaches the final position, the robot will be vertically placed on top of the handle. In this condition, the human does not feel any horizontal pulling force and can release the handle.  The human guiding controller and the admittance filter are disabled. 
	Finally, the robot lands in position-mode.   
\end{enumerate}

\begin{figure}[t]
	\centering
	\includegraphics[width = \columnwidth]{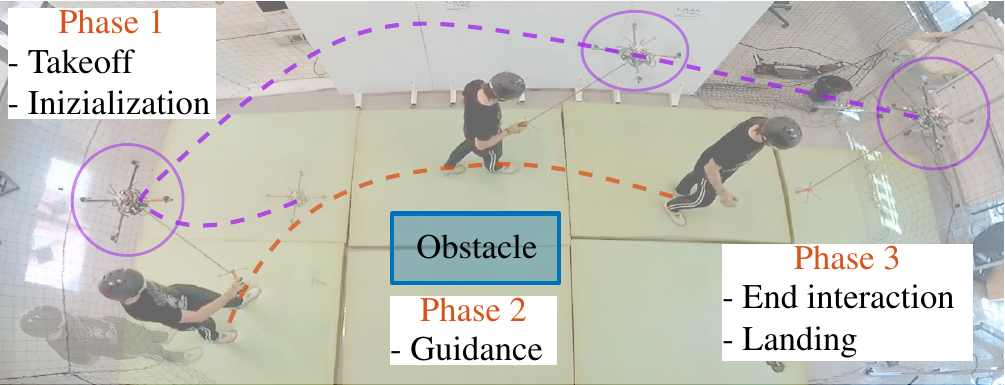}
	\caption{Representation of the experimental phases. On each snapshot the robot is highlighted with a purple circle. Robot and human trajectories are marked with purple and blue dashed lines, respectively.}
	\label{fig:experiment:phases}
\end{figure}

During the experiment, the human is not blinded for safety reasons. 
Nevertheless, he/she is asked to look at the ground and not at the robot, focusing on the forces felt on the hand. 
This is done to encourage the human to be led feeling the forces that the robot is applying through the cable, instead of using it as a visual reference. 
Furthermore, the human is firstly ``trained'' on a different path to make him/her comfortable with the experiment, and confident with the reliability of the robot. 
\begin{figure}[t]
	\centering
	\begin{subfigure}[c]{\columnwidth}
		\includegraphics[width = \columnwidth]{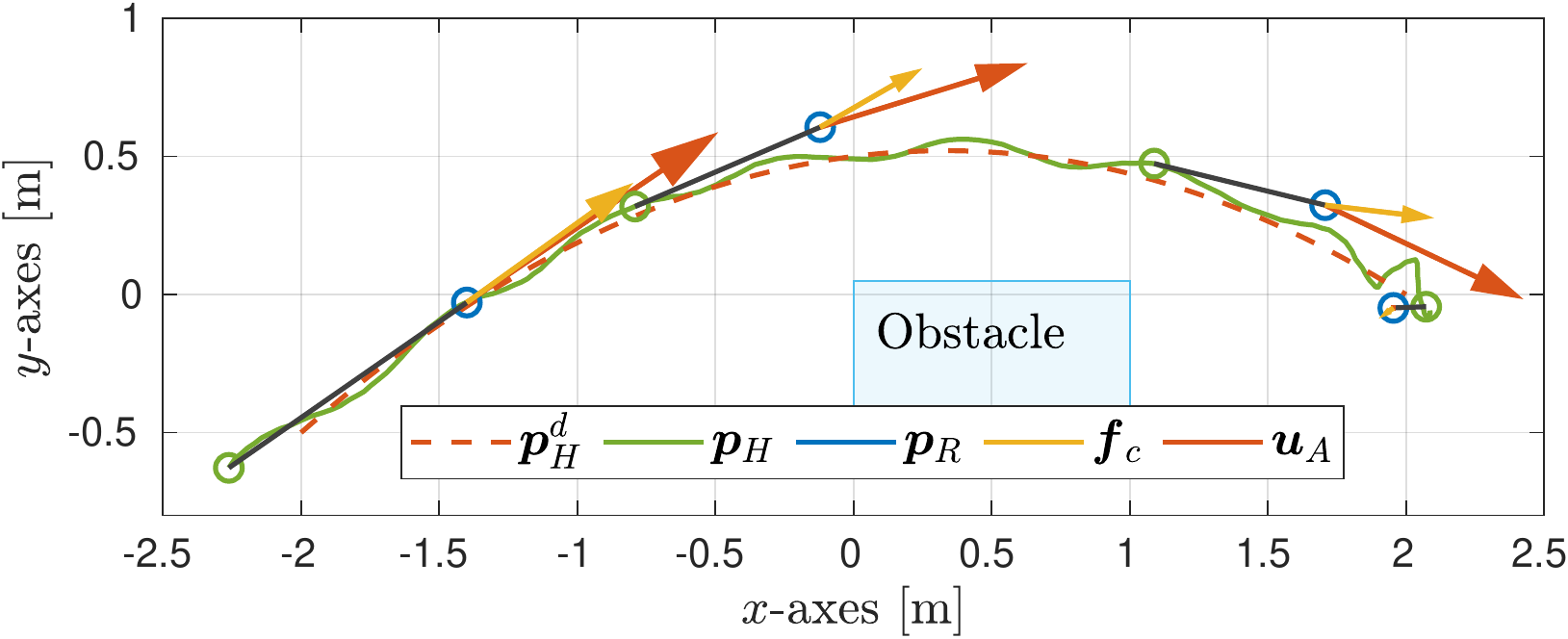}
		\caption{The desired and actual human path showed in the $x$-$y$ plane. Four schematic representation of the system along the path are shown. The green and blue circles represent the human hand and the robot, respectively. The black line represents the cable. The red and yellow arrows represent the $x$-$y$ components of $\uA$ and $\cableForce$, respectively. }
		\label{fig:experiment:single:2D}
	\end{subfigure} \\
	\medskip
	\begin{subfigure}[c]{\columnwidth}
		\includegraphics[width = \columnwidth]{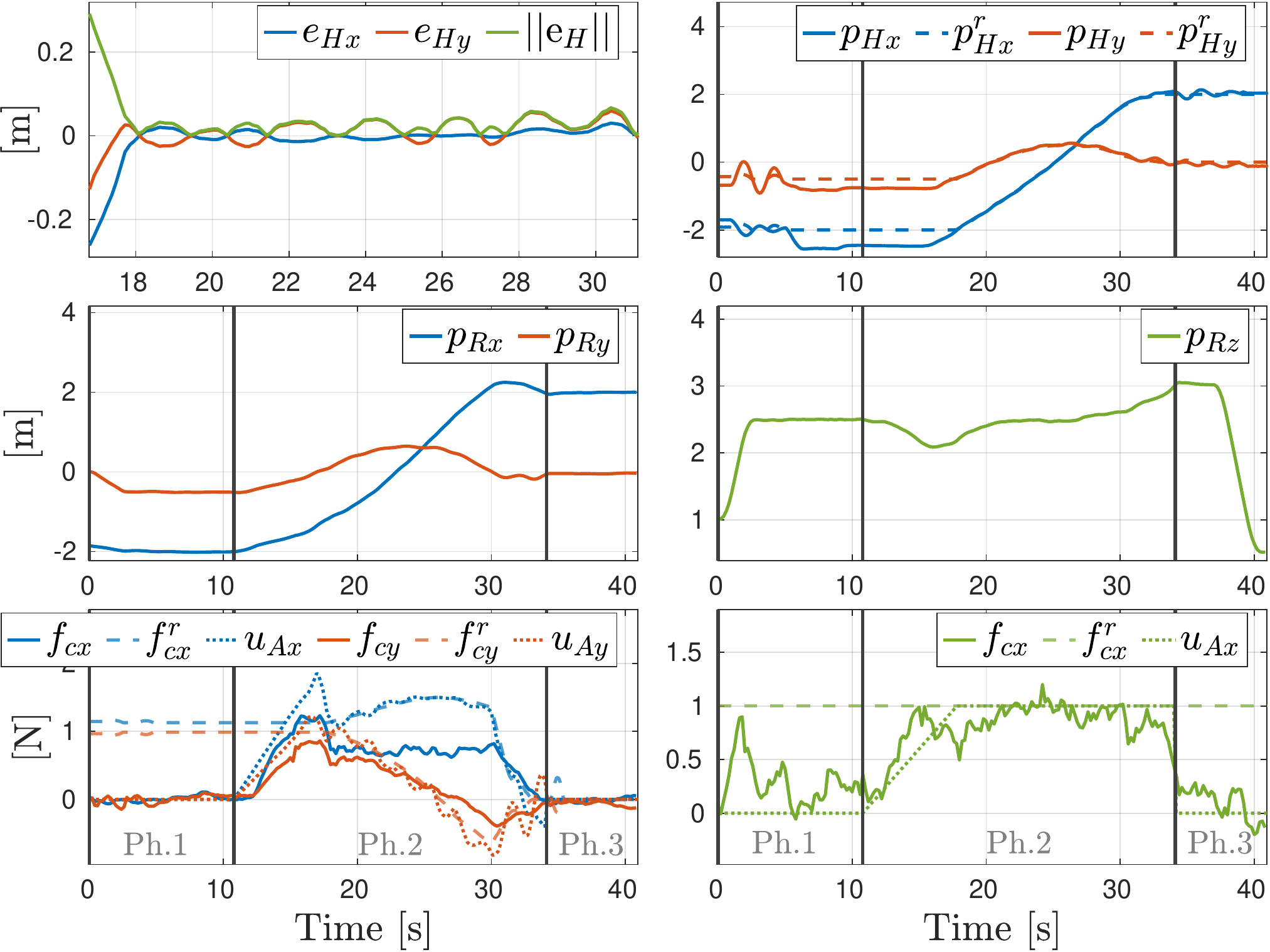}
		\caption{Plots of the human position error (zoomed during the guiding phase), desired and actual human hand position, robot position, and cable force, with respect to time.
		}
		\label{fig:experiment:single:performance}
	\end{subfigure}	
	\caption{Detailed experimental results of one subject.}
	\label{fig:experiment:single}
\end{figure}
\subsection{Experimental results}

In \fig\ref{fig:experiment:single}, we report the detailed results obtained with one participant to the experimental campaign.
In \fig\ref{fig:experiment:single:2D}, we show the desired and actual human path from the top, during {Phase 2}. 
We also schematically depict the system configuration at some instants to show how the robot pulls the human along the desired path.
In \fig\ref{fig:experiment:single:performance}, detailed quantities of the system along the full experiments are shown. 

During \textit{Phase 1}, the robot takes off and goes in the initial desired position. The non-zero estimated force along $\zW$ corresponds to the weight of the handle that in this phase is suspended below the robot. 
The initial oscillations of $\pHx$ and $\pHy$ are also caused by the oscillations of the handle.

Passing to \textit{Phase 2}, the robot starts moving to stretch the cable and pull the human. 
This can be seen from the estimated force along the cable. 
Looking at the human position error $\errorH(t) = \pHRef(t) - \pH(t)$, we remark that its norm is always below $5~[\rm cm]$. 
Although the error is quite small, what is more important is that the stability of the interaction is always guaranteed, even though the human applies non-zero forces. 
The experiment additionally validates the passivity of the system which guarantees stability even in non-ideal conditions. 
One can notice that, during Phase 2, there is an error between the desired horizontal force and the actual one. 
This is because the robot is moving and the damping plays against the generation of the desired force.
Passivity-based methods are known to be very conservative. 
Practically, this is not a problem.
If the human is moving toward the good direction, there is no need to increase the guiding force. 
On the other hand, if the human stops, the robot will apply the desired force, clearly showing the correct direction to the human.

\textit{Phase 3} starts when the human does not feel horizontal forces. 
He/she understands to be arrived at the desired position and so releases the handle. After, the robot lands.

\begin{figure}[t]
	\centering
	\begin{subfigure}[c]{\columnwidth}
		\includegraphics[width = \columnwidth]{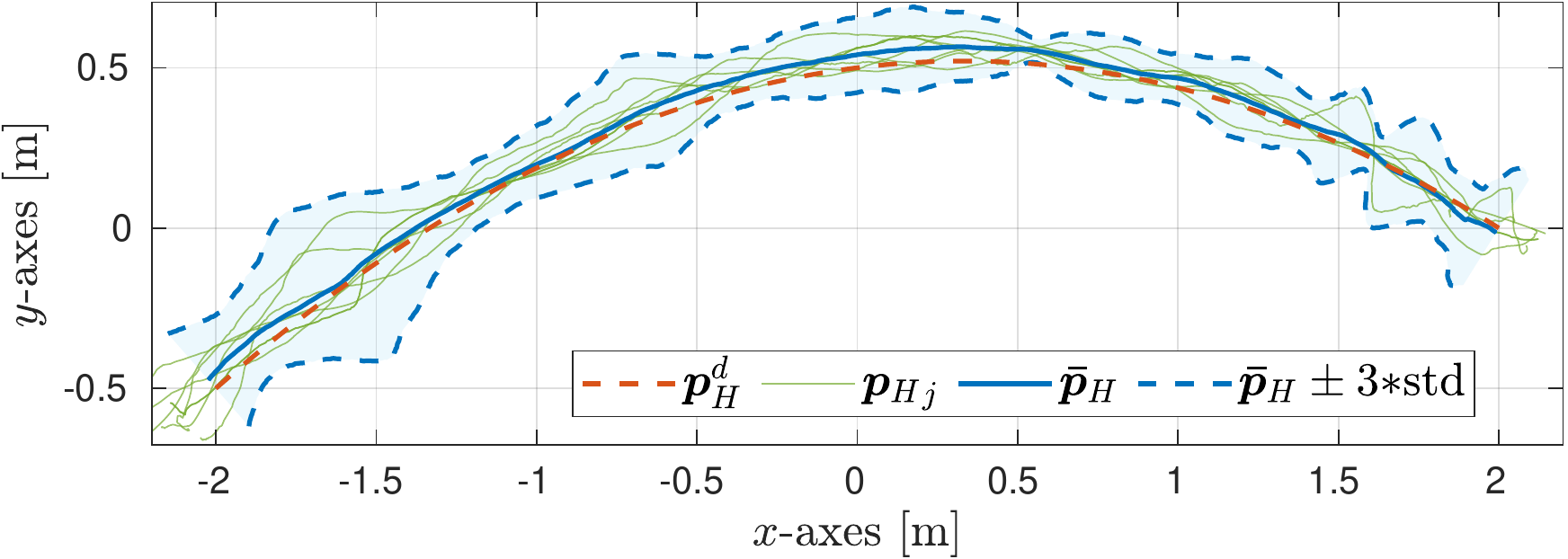}
	\end{subfigure}
	\begin{subfigure}[c]{\columnwidth}
		\includegraphics[width = \columnwidth]{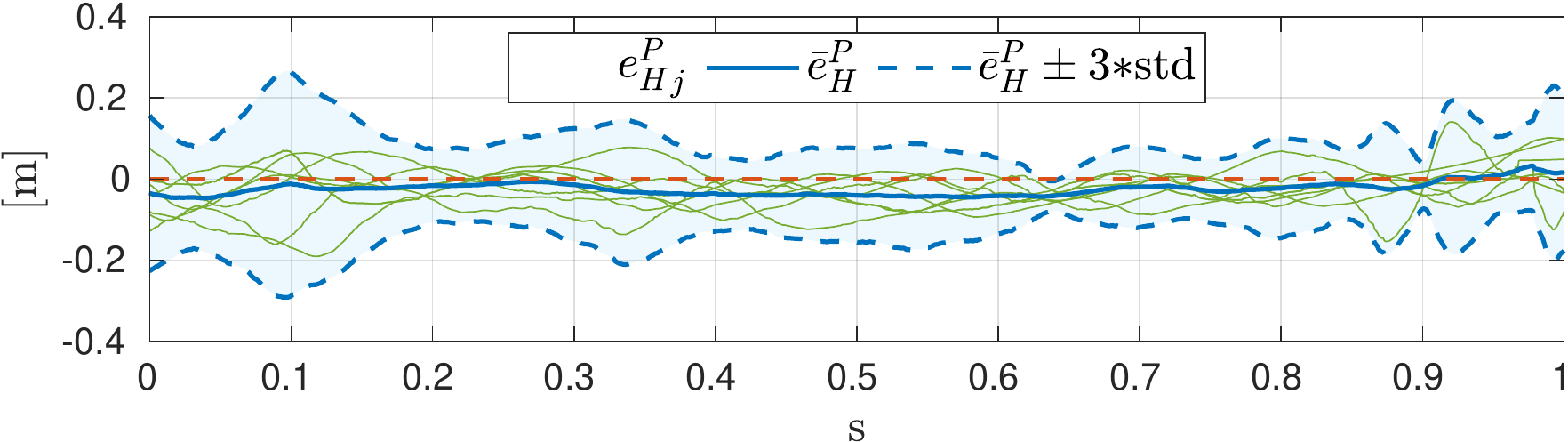}
	\end{subfigure} 
	\caption{Experimental results of eight tests with four subjects. \textit{On the top:} the desired human path, the actual human path for the eight experiments, and the mean human path showed in the $x$-$y$ plane.
	\textit{On the bottom:} the path-following error ${\errorHPath}_j$ for every experiment, and the mean path-following error. }
	\label{fig:experiment:global:2Dtrajectories}
\end{figure}

To further validate the proposed system, we tested it with a group of four different subjects, performing the experiment twice. 
To evaluate the path-following performance, we define the path-following error with respect to the desired path as:
\begin{align}
	\errorHPath(s) = \min_{t \in [0,T]} \left( (\pH(t) - \pHDes(s))^\top \pHDesPerp(s) \right),
	\label{eqn:pathError}
\end{align}
where $\pHDesPerp(s) \in \nR{3}$ is the normal vector to the path in the point $\pHDes(s)$, and $T \in \nR{}_{> 0}$ is the experiment final time.
Given a certain point of the desired path $\pHDes(s)$, parameterized by $\param$, $\errorHPath(s)$ is the distance between $\pHDes(s)$ and the human position intersecting the normal to the path in $\pHDes(s)$. 
Notice that this definition is a \textit{geometric error}; it does not depend on the velocity of the human, neither on his/her trajectory, but just on the geometric path\footnote{Other definitions of the distance between paths can be used. We believe that \eqref{eqn:pathError} allows clearly showing the envelope  around the desired path in which the human will probably pass.}. 
For each $s \in [0,1]$, we then define the mean path-following error, $\errorHPathMean(s)$, and the corresponding standard deviation (std), $\errorHPathStd(s)$:
\begin{align}
	\begin{split}
		\errorHPathMean(s) = \frac{1}{N} \sum_{j=1}^N {\errorHPath}_j(s) \\
		\errorHPathStd(s) = \frac{1}{N -1} \sum_{j=1}^N \left( \errorHPathMean(s) - {\errorHPath}_j(s) \right)^2,
	\end{split}
\end{align}
where $N$ is the number of performed tests, and the subscript $\star_j$ indicates the $j$-th test.
Figure~\ref{fig:experiment:global:2Dtrajectories} shows the global performance.
One can notice that $\errorHPathMean(s)$ is always below $5\,[\rm cm]$, while the maximum standard deviation is below $10\,[\rm cm]$.
This means that, according to the performed experiments, the human will always pass in a tube with diameter $60\,[\rm cm]$ around the desired path. 
This information could be used at the motion planning level to compute paths that stay sufficiently away from obstacles.
We also remark that every subject behaves in a slightly different way (e.g., following the robot with a higher of lower speed, making stops, applying extra forces, etc.). Nevertheless, the system always maintain stability and provide good tracking performance.
We remark that these results do not pretend to be statistically significant. The goal is to provide a global evaluation of the proposed method, which further validates its stability and robustness.

Notice that the human tends to stay always `outward' the desired curved. 
This is the effect of the human inertia that the robot controller does not take into account. 
Furthermore, we noticed that the human might have some difficulties understanding that he/she is arrived, resulting in useless small corrections.
These marginal aspects are left to future works.
\section{Conclusions}\label{sec:conclusions}

In this manuscript, a system composed by a human \textit{physically connected} to an aerial vehicle as been studied for the fist time.
In particular, the human holds a handle which is connected by a cable to an aerial platform. 
The objective is to make the aerial vehicle capable of safely guiding the human to a desired position, or along a desired path, by pulling him/her exploiting the tether.

We proposed a control strategy that makes the robot compliant to the human-interaction and allows to exert forces to the human. 
These forces have been properly designed to drive the human to a desired position.
Using Lyapunov theory, we formally proved that the desired human position is asymptotically stable for the considered closed-loop system.
Furthermore, we showed that the latter is output-strictly passive which guarantees stability and robustness of the method even for time-varying inputs and for non-zero extra human forces.
An additional maneuver regulation strategy has been designed to allow the robot guiding the human along a desired path.

The overall method has been experimentally validated with a group of four subjects.
The results show that the system is able to keep the path-following error bounded. 
The robot correctly drives the human along the desired path, compensating for errors, and guaranteeing the stability.

From a practical point of view, we foresee the integration of onboard sensors for outdoor operations and for the onboard detection of obstacles. 
Based  on this, an online obstacle avoidance policy could be implemented.

From a more scientific point of view, the human model could be enhanced to include the main body and arm elements, together with the heading.
This should be considered in the controller to improve the control of the human  configuration, i.e., position of the main body, heading, and relative position of the arm.
A more precise hybrid position/force controller (like the one in~\cite{2016c-TogDasFra}) could be applied, in order to finely control the intensity and direction of the force felt by the human, regardless to his/her velocity. 

Future works on the HRI aspect can be also done. A human-aware path planner could be designed to explicitly consider at the planning level a path acceptable by the human taking into account human-preferences in terms of distance to obstacles and dynamics of the motion \cite{sisbot2007, Khambhaita2020} and adaptation to human pace \cite{fiore2015}.
The improvement of the indirect force-based communication between human and robot should be investigated as well. 
In particular, impulsive forces or wearable haptic devices could be used to enhance the human-robot interface, e.g., to notice the human that he/she arrived at the desired location. 
On the other hand, the robot should also be able to understand and react to human intentions. 
As an example, if the human wants to stop, the robot should decrease, or even zero, the pulling force.

\section*{Acknowledgments}
\addcontentsline{toc}{section}{Acknowledgment}

We thank Gabriele Minini for his contribution on the early stage of this work.
We are also grateful to Anthony Mallet for his work on the software framework.

\bibliographystyle{IEEEtran}
\bibliography{./bibAlias,./bibCustom}

\begin{thebibliography}{10}
\providecommand{\url}[1]{#1}
\csname url@samestyle\endcsname
\providecommand{\newblock}{\relax}
\providecommand{\bibinfo}[2]{#2}
\providecommand{\BIBentrySTDinterwordspacing}{\spaceskip=0pt\relax}
\providecommand{\BIBentryALTinterwordstretchfactor}{4}
\providecommand{\BIBentryALTinterwordspacing}{\spaceskip=\fontdimen2\font plus
\BIBentryALTinterwordstretchfactor\fontdimen3\font minus
  \fontdimen4\font\relax}
\providecommand{\BIBforeignlanguage}[2]{{%
\expandafter\ifx\csname l@#1\endcsname\relax
\typeout{** WARNING: IEEEtran.bst: No hyphenation pattern has been}%
\typeout{** loaded for the language `#1'. Using the pattern for}%
\typeout{** the default language instead.}%
\else
\language=\csname l@#1\endcsname
\fi
#2}}
\providecommand{\BIBdecl}{\relax}
\BIBdecl

\bibitem{2018-RugLipOll}
F.~Ruggiero, V.~Lippiello, and A.~Ollero, ``Aerial manipulation: A literature
  review,'' \emph{IEEE Robotics and Automation Letters}, vol.~3, no.~3, pp.
  1957--1964, 2018.

\bibitem{2013-NguLee}
H.~Nguyen and D.~Lee, ``Hybrid force/motion control and internal dynamics of
  quadrotors for tool operation,'' in \emph{2013 IEEE/RSJ Int. Conf. on
  Intelligent Robots and Systems}, Tokyo, Japan, Nov. 2013, pp. 3458--3464.

\bibitem{2016-BarCapHamStrFum}
T.~Bartelds, A.~Capra, S.~Hamaza, S.~Stramigioli, and M.~Fumagalli, ``Compliant
  aerial manipulators: Toward a new generation of aerial robotic workers,''
  \emph{IEEE Robotics and Automation Letters}, vol.~1, no.~1, pp. 477--483,
  2016.

\bibitem{2017g-TogYueBuoFra}
M.~Tognon, B.~Y\"uksel, G.~Buondonno, and A.~Franchi, ``Dynamic decentralized
  control for protocentric aerial manipulators,'' in \emph{2017 {IEEE} Int.
  Conf. on Robotics and Automation}, Singapore, May 2017, pp. 6375--6380.

\bibitem{2019-CatReaSuaDilPieAntCacHerOll}
E.~{Cataldi}, F.~{Real}, A.~{Suarez}, P.~A. {Di Lillo}, F.~{Pierri},
  G.~{Antonelli}, F.~{Caccavale}, G.~{Heredia}, and A.~{Ollero}, ``Set-based
  inverse kinematics control of an anthropomorphic dual arm aerial
  manipulator,'' in \emph{2019 International Conference on Robotics and
  Automation (ICRA)}, May 2019, pp. 2960--2966.

\bibitem{2016j-RylBicFra}
M.~Ryll, D.~Bicego, and A.~Franchi, ``Modeling and control of {FAST-Hex}: a
  fully-actuated by synchronized-tilting hexarotor,'' in \emph{2016 IEEE/RSJ
  Int. Conf. on Intelligent Robots and Systems}, Daejeon, South Korea, Oct.
  2016, pp. 1689--1694.

\bibitem{kamel2018voliro}
M.~Kamel, S.~Verling, O.~Elkhatib, C.~Sprecher, P.~Wulkop, Z.~Taylor,
  R.~Siegwart, and I.~Gilitschenski, ``The voliro omniorientational hexacopter:
  An agile and maneuverable tiltable-rotor aerial vehicle,'' \emph{{IEEE}
  Robotics \& Automation Magazine}, vol.~25, no.~4, pp. 34--44, 2018.

\bibitem{2019e-TogTelGasSabBicMalLanSanRevCorFra}
M.~Tognon, H.~A. {Tello~Ch\'avez}, E.~Gasparin, Q.~Sabl\'e, D.~Bicego,
  A.~Mallet, M.~Lany, G.~Santi, B.~Revaz, J.~Cort\'es, and A.~Franchi, ``A
  truly redundant aerial manipulator system with application to push-and-slide
  inspection in industrial plants,'' \emph{IEEE Robotics and Automation
  Letters}, vol.~4, no.~2, pp. 1846--1851, 2019.

\bibitem{2011-BerKonMazOll}
M.~Bernard, K.~Kondak, I.~Maza, and A.~Ollero, ``Autonomous transportation and
  deployment with aerial robots for search and rescue missions,'' \emph{Journal
  of Field Robotics}, vol.~28, no.~6, pp. 914--931, 2011.

\bibitem{2018m-OllHerFraAntKonSanVigSanTruBalAndRod}
A.~Ollero, G.~Heredia, A.~Franchi, G.~Antonelli, K.~Kondak, A.~Sanfeliu,
  A.~Viguria, J.~R. {Martinez-de Dios}, F.~Pierri, J.~Cort\'es,
  A.~Santamaria-Navarro, M.~A. Trujillo, R.~Balachandran, J.~Andrade-Cetto, and
  A.~Rodriguez, ``The {AEROARMS} project: Aerial robots with advanced
  manipulation capabilities for inspection and maintenance,'' \emph{{IEEE}
  Robotics \& Automation Magazine, Special Issue on Floating-base (Aerial and
  Underwater) Manipulation}, vol.~25, no.~4, pp. 12--23, 2018.

\bibitem{2014-MerStrCar}
A.~Y. {Mersha}, S.~{Stramigioli}, and R.~{Carloni}, ``On bilateral
  teleoperation of aerial robots,'' \emph{IEEE Transactions on Robotics},
  vol.~30, no.~1, pp. 258--274, 2014.

\bibitem{2015c-GioMohFraPra}
G.~Gioioso, M.~Mohammadi, A.~Franchi, and D.~Prattichizzo, ``A force-based
  bilateral teleoperation framework for aerial robots in contact with the
  environment,'' in \emph{2015 {IEEE} Int. Conf. on Robotics and Automation},
  Seattle, WA, May 2015, pp. 318--324.

\bibitem{2017-PesHitKau}
E.~{Peshkova}, M.~{Hitz}, and B.~{Kaufmann}, ``Natural interaction techniques
  for an unmanned aerial vehicle system,'' \emph{IEEE Pervasive Computing},
  vol.~16, no.~1, pp. 34--42, 2017.

\bibitem{2017-AchBevDun}
U.~{Acharya}, A.~{Bevins}, and B.~A. {Duncan}, ``Investigation of human-robot
  comfort with a small unmanned aerial vehicle compared to a ground robot,'' in
  \emph{2017 IEEE/RSJ International Conference on Intelligent Robots and
  Systems (IROS)}, 2017, pp. 2758--2765.

\bibitem{2013-LieYai}
C.~F. {Liew} and T.~{Yairi}, ``Quadrotor or blimp? noise and appearance
  considerations in designing social aerial robot,'' in \emph{2013 8th ACM/IEEE
  International Conference on Human-Robot Interaction (HRI)}, 2013, pp.
  183--184.

\bibitem{2019-WojFreSasShaCau}
A.~{Wojciechowska}, J.~{Frey}, S.~{Sass}, R.~{Shafir}, and J.~R. {Cauchard},
  ``Collocated human-drone interaction: Methodology and approach strategy,'' in
  \emph{2019 14th ACM/IEEE International Conference on Human-Robot Interaction
  (HRI)}, 2019, pp. 172--181.

\bibitem{2013-DunMur}
B.~A. {Duncan} and R.~R. {Murphy}, ``Comfortable approach distance with small
  unmanned aerial vehicles,'' in \emph{2013 IEEE RO-MAN}, 2013, pp. 786--792.

\bibitem{2013-KruPanAlaKir}
T.~Kruse, A.~K. Pandey, R.~Alami, and A.~Kirsch, ``Human-aware robot
  navigation: {A} survey,'' \emph{Robotics Auton. Syst.}, vol.~61, no.~12, pp.
  1726--1743, 2013.

\bibitem{2015-RioSpaLau}
J.~Rios{-}Martinez, A.~Spalanzani, and C.~Laugier, ``From proxemics theory to
  socially-aware navigation: {A} survey,'' \emph{I. J. Social Robotics},
  vol.~7, no.~2, pp. 137--153, 2015.

\bibitem{2016-ThoHofCca}
A.~Thomaz, G.~Hoffman, and M.~{\c{C}}akmak, ``Computational human-robot
  interaction,'' \emph{Foundations and Trends in Robotics}, vol.~4, no. 2-3,
  pp. 105--223, 2016.

\bibitem{2017-LemWarSisCloAla}
S.~Lemaignan, M.~Warnier, E.~A. Sisbot, A.~Clodic, and R.~Alami, ``Artificial
  cognition for social human-robot interaction: An implementation,''
  \emph{Artificial Intelligence}, vol. 247, pp. 45--69, 2017.

\bibitem{2013-AugDan}
F.~{Augugliaro} and R.~{D'Andrea}, ``Admittance control for physical
  human-quadrocopter interaction,'' in \emph{2013 European Control Conference
  (ECC)}, 2013, pp. 1805--1810.

\bibitem{2008-DesSicDelBic}
A.~{De Santis}, B.~Siciliano, A.~{De Luca}, and A.~Bicchi, ``An atlas of
  physical human--robot interaction,'' \emph{Mechanism and Machine Theory},
  vol.~43, no.~3, pp. 253--270, 2008.

\bibitem{2018-AjoZanIvaAlbKosKha}
A.~Ajoudani, A.~M. Zanchettin, S.~Ivaldi, A.~Albu-Sch{\"a}ffer, K.~Kosuge, and
  O.~Khatib, ``Progress and prospects of the human--robot collaboration,''
  \emph{Autonomous Robots}, vol.~42, no.~5, pp. 957--975, 2018.

\bibitem{2015-FicVilSic}
F.~{Ficuciello}, L.~{Villani}, and B.~{Siciliano}, ``Variable impedance control
  of redundant manipulators for intuitive human–robot physical interaction,''
  \emph{IEEE Transactions on Robotics}, vol.~31, no.~4, pp. 850--863, 2015.

\bibitem{2007-AlbHadOttSteWimHir}
A.~Albu-Sch{\"a}ffer, S.~Haddadin, C.~Ott, A.~Stemmer, T.~Wimb{\"o}ck, and
  G.~Hirzinger, ``The dlr lightweight robot: design and control concepts for
  robots in human environments,'' \emph{Industrial Robot: an international
  journal}, 2007.

\bibitem{1985-Hog}
N.~Hogan, ``{Impedance Control: An Approach to Manipulation: Part
  II—Implementation},'' \emph{Journal of Dynamic Systems, Measurement, and
  Control}, vol. 107, no.~1, pp. 8--16, 03 1985.

\bibitem{1988-ColHog}
J.~E. Colgate and N.~Hogan, ``Robust control of dynamically interacting
  systems,'' \emph{International journal of Control}, vol.~48, no.~1, pp.
  65--88, 1988.

\bibitem{2014-AgrCheBusGerKhe}
D.~J. {Agravante}, A.~{Cherubini}, A.~{Bussy}, P.~{Gergondet}, and
  A.~{Kheddar}, ``Collaborative human-humanoid carrying using vision and haptic
  sensing,'' in \emph{2014 IEEE International Conference on Robotics and
  Automation (ICRA)}, 2014, pp. 607--612.

\bibitem{2017a-TogFra}
M.~Tognon and A.~Franchi, ``Dynamics, control, and estimation for aerial robots
  tethered by cables or bars,'' \emph{IEEE Trans. on Robotics}, vol.~33, no.~4,
  pp. 834--845, 2017.

\bibitem{2016c-TogDasFra}
M.~Tognon, S.~S. Dash, and A.~Franchi, ``Observer-based control of position and
  tension for an aerial robot tethered to a moving platform,'' \emph{IEEE
  Robotics and Automation Letters}, vol.~1, no.~2, pp. 732--737, 2016.

\bibitem{2018h-TogGabPalFra}
M.~Tognon, C.~Gabellieri, L.~Pallottino, and A.~Franchi, ``Aerial
  co-manipulation with cables: The role of internal force for equilibria,
  stability, and passivity,'' \emph{IEEE Robotics and Automation Letters,
  Special Issue on Aerial Manipulation}, vol.~3, no.~3, pp. 2577 -- 2583, 2018.

\bibitem{2015-AviFunHen}
M.~Avila, M.~Funk, and N.~Henze, ``Dronenavigator: Using drones for navigating
  visually impaired persons,'' in \emph{Proceedings of the 17th International
  ACM SIGACCESS Conference on Computers \& Accessibility}, 2015, pp. 327--328.

\bibitem{2015-MueMui}
F.~Mueller and M.~Muirhead, ``Jogging with a quadcopter,'' in \emph{Proceedings
  of the 33rd Annual ACM Conference on Human Factors in Computing Systems},
  2015, pp. 2023--2032.

\bibitem{BurgardCFHLSST99}
W.~Burgard, A.~B. Cremers, D.~Fox, D.~H{\"{a}}hnel, G.~Lakemeyer, D.~Schulz,
  W.~Steiner, and S.~Thrun, ``Experiences with an interactive museum tour-guide
  robot,'' \emph{Artificial Intelligence}, vol. 114, no. 1-2, pp. 3--55, 1999.

\bibitem{SiegwartABBFGJLMMPPRTT03}
R.~Siegwart, K.~O. Arras, S.~Bouabdallah, D.~Burnier, G.~Froidevaux,
  X.~Greppin, B.~Jensen, A.~Lorotte, L.~Mayor, M.~Meisser, R.~Philippsen,
  R.~Piguet, G.~Ramel, G.~Terrien, and N.~Tomatis, ``Robox at expo.02: {A}
  large-scale installation of personal robots,'' \emph{Robotics and Autonomous
  Systems}, vol.~42, no. 3-4, pp. 203--222, 2003.

\bibitem{ClodicFACBBCDDE06}
A.~Clodic, S.~Fleury, R.~Alami, R.~Chatila, G.~Bailly, L.~Brethes, M.~Cottret,
  P.~Dan{\`{e}}s, X.~Dollat, F.~Elisei, I.~Ferran{\'{e}}, M.~Herrb,
  G.~Infantes, C.~Lemaire, F.~Lerasle, J.~Manhes, P.~Marcoul, P.~Menezes, and
  V.~Montreuil, ``Rackham: An interactive robot-guide,'' in \emph{ROMAN 2006 -
  The 15th IEEE International Symposium on Robot and Human Interactive
  Communication}, 2006, pp. 502--509.

\bibitem{TriebelAABBCCCE15}
R.~Triebel, K.~O. Arras, R.~Alami, L.~Beyer, S.~Breuers, R.~Chatila,
  M.~Chetouani, D.~Cremers, V.~Evers, M.~Fiore, H.~Hung, O.~A.~I.
  Ram{\'{\i}}rez, M.~Joosse, H.~Khambhaita, T.~Kucner, B.~Leibe, A.~J.
  Lilienthal, T.~Linder, M.~Lohse, M.~Magnusson, B.~Okal, L.~Palmieri, U.~Rafi,
  M.~van Rooij, and L.~Zhang, ``{SPENCER:} {A} socially aware service robot for
  passenger guidance and help in busy airports,'' in \emph{Field and Service
  Robotics: Results of the 10th International Conference}, 2016, pp. 607--622.

\bibitem{Morris2003}
A.~{Morris}, R.~{Donamukkala}, A.~{Kapuria}, A.~{Steinfeld}, J.~T. {Matthews},
  J.~{Dunbar-Jacob}, and S.~{Thrun}, ``A robotic walker that provides
  guidance,'' in \emph{2003 IEEE International Conference on Robotics and
  Automation}, vol.~1, 2003, pp. 25--30 vol.1.

\bibitem{Tachi1985}
S.~{Tachi}, K.~{Tanie}, K.~{Komoriya}, and M.~{Abe}, ``Electrocutaneous
  communication in a guide dog robot (meldog),'' \emph{IEEE Transactions on
  Biomedical Engineering}, vol. BME-32, no.~7, pp. 461--469, 1985.

\bibitem{2009-MelPraNagIll}
A.~A. Melvin, B.~Prabu, R.~Nagarajan, and B.~Illias, ``Rovi: a robot for
  visually impaired for collision-free navigation,'' in \emph{Proc. of the
  International Conference on Man-Machine Systems (ICoMMS 2009)}, 2009, pp.
  3B5--1--3B5--6.

\bibitem{2008-DelBoo_cha13}
A.~{De Luca} and W.~Book, ``Robots with flexible elements,'' in \emph{Springer
  Handbook of Robotics, B. Siciliano, O. Khatib}.\hskip 1em plus 0.5em minus
  0.4em\relax Springer, 2008, ch.~13, pp. 287--317.

\bibitem{2013n-SpeNotBueFra}
S.~Spedicato, G.~Notarstefano, H.~H. B\"ulthoff, and A.~Franchi, ``Aggressive
  maneuver regulation of a quadrotor {UAV},'' in \emph{16th Int. Symp. on
  Robotics Research}, ser. Tracts in Advanced Robotics.\hskip 1em plus 0.5em
  minus 0.4em\relax Singapore: Springer, Dec. 2013.

\bibitem{1995-HauHin}
J.~Hauser and R.~Hindman, ``Maneuver regulation from trajectory tracking:
  Feedback linearizable systems,'' \emph{IFAC Proceedings Volumes}, vol.~28,
  no.~14, pp. 595 -- 600, 1995.

\bibitem{1994-IkeMonIno}
R.~Ikeura, H.~Monden, and H.~Inooka, ``Cooperative motion control of a robot
  and a human,'' in \emph{Proceedings of 1994 3rd IEEE International Workshop
  on Robot and Human Communication}, July 1994, pp. 112--117.

\bibitem{2005-SpeShaGol}
J.~E. Speich, L.~Shao, and M.~Goldfarb, ``Modeling the human hand as it
  interacts with a telemanipulation system,'' \emph{Mechatronics}, vol.~15,
  no.~9, pp. 1127--1142, 2005.

\bibitem{1998-IkeMiz}
R.~Ikeura and K.~Mizutani, ``Control of robot cooperating with human motion,''
  in \emph{Proc. of IEEE International Workshop on Robotics and Human
  Communication}, 1998, pp. 525--529.

\bibitem{2012-MorLawKucSezBasHir}
A.~M\"ortl, M.~Lawitzky, A.~Kucukyilmaz, M.~Sezgin, C.~Basdogan, and S.~Hirche,
  ``The role of roles: Physical cooperation between humans and robots,''
  \emph{The International Journal of Robotics Research}, vol.~31, no.~13, pp.
  1656--1674, 2012.

\bibitem{1997-KosKaz}
K.~{Kosuge} and N.~{Kazamura}, ``Control of a robot handling an object in
  cooperation with a human,'' in \emph{Proceedings 6th IEEE International
  Workshop on Robot and Human Communication. RO-MAN'97 SENDAI}, Sep. 1997, pp.
  142--147.

\bibitem{1993-KosFujFuk}
K.~{Kosuge}, Y.~{Fujisawa}, and T.~{Fukuda}, ``Mechanical system control with
  man-machine-environment interactions,'' in \emph{1993 {IEEE} Int. Conf. on
  Robotics and Automation}, May 1993, pp. 239--244 vol.1.

\bibitem{2007-DucGos}
V.~{Duchaine} and C.~M. {Gosselin}, ``General model of human-robot cooperation
  using a novel velocity based variable impedance control,'' in \emph{Second
  Joint EuroHaptics Conference and Symposium on Haptic Interfaces for Virtual
  Environment and Teleoperator Systems (WHC'07)}, March 2007, pp. 446--451.

\bibitem{2020d-HamTogFra}
M.~Hamandi, M.Tognon, and A.~Franchi, ``Direct acceleration feedback control of
  quadrotor aerial vehicles,'' in \emph{2020 {IEEE} Int. Conf. on Robotics and
  Automation}, Paris, France, May 2020.

\bibitem{2017e-RylMusPieCatAntCacFra}
M.~Ryll, G.~Muscio, F.~Pierri, E.~Cataldi, G.~Antonelli, F.~Caccavale, and
  A.~Franchi, ``{6D} physical interaction with a fully actuated aerial robot,''
  in \emph{2017 {IEEE} Int. Conf. on Robotics and Automation}, Singapore, May
  2017, pp. 5190--5195.

\bibitem{2016-HadCro}
S.~Haddadin and E.~Croft, ``Physical human--robot interaction,'' in
  \emph{Springer handbook of robotics}.\hskip 1em plus 0.5em minus 0.4em\relax
  Springer, 2016, pp. 1835--1874.

\bibitem{2009-SicSciVilOri}
B.~Siciliano, L.~Sciavicco, L.~Villani, and G.~Oriolo, \emph{Robotics:
  Modelling, Planning and Control}.\hskip 1em plus 0.5em minus 0.4em\relax
  Springer, 2009.

\bibitem{2017-TagKamVerSieNie}
A.~Tagliabue, M.~Kamel, S.~Verling, R.~Siegwart, and J.~Nieto, ``Collaborative
  transportation using {MAVs} via passive force control,'' in \emph{2017 {IEEE}
  Int. Conf. on Robotics and Automation}, Singapore, 2016, pp. 5766--5773.

\bibitem{2000-HorParVan}
R.~Horst, P.~M. Pardalos, and N.~V. Thoai, \emph{Introduction to global
  optimization}.\hskip 1em plus 0.5em minus 0.4em\relax Springer Science \&
  Business Media, 2000.

\bibitem{2002-Kha}
H.~K. Khalil, \emph{Nonlinear Systems}, 3rd~ed.\hskip 1em plus 0.5em minus
  0.4em\relax Prentice Hall, 2001.

\bibitem{2001-DiaJimSevVicCiv}
F.~D. del R{\'\i}o, G.~Jim{\'e}nez, J.~L. Sevillano, S.~Vicente, and A.~C.
  Balcells, ``A path following control for unicycle robots,'' \emph{Journal of
  Robotic Systems}, vol.~18, no.~7, pp. 325--342, 2001.

\bibitem{2002-SkjFosKok}
R.~Skjetne, T.~I. Fossen, and P.~Kokotovi{\'c}, ``Output maneuvering for a
  class of nonlinear systems,'' \emph{IFAC Proceedings Volumes}, vol.~35,
  no.~1, pp. 501--506, 2002.

\bibitem{2017c-FraMal}
A.~Franchi and A.~Mallet, ``Adaptive closed-loop speed control of {BLDC} motors
  with applications to multi-rotor aerial vehicles,'' in \emph{2017 {IEEE} Int.
  Conf. on Robotics and Automation}, Singapore, May 2017, pp. 5203--5208.

\bibitem{sisbot2007}
E.~A. Sisbot, L.~F. Marin-Urias, R.~Alami, and T.~Simeon, ``{A Human Aware
  Mobile Robot Motion Planner},'' \emph{{IEEE Transactions on Robotics}},
  vol.~23, no.~5, pp. 874 -- 883, Oct. 2007.

\bibitem{Khambhaita2020}
H.~Khambhaita and R.~Alami, ``Viewing robot navigation in human environment as
  a cooperative activity,'' in \emph{Robotics Research}.\hskip 1em plus 0.5em
  minus 0.4em\relax Cham: Springer International Publishing, 2020, pp.
  285--300.

\bibitem{fiore2015}
M.~Fiore, H.~Khambhaita, G.~Milliez, and R.~Alami, ``{An Adaptive and Proactive
  Human-Aware Robot Guide},'' in \emph{{7th International Conference on Social
  Robotics (ICSR 2015)}}.\hskip 1em plus 0.5em minus 0.4em\relax Paris, France:
  {Springer}, Oct. 2015, pp. 194--203.

\end{thebibliography}

\begin{IEEEbiography}[{\includegraphics[width=1in,height=1.25in,clip,keepaspectratio]{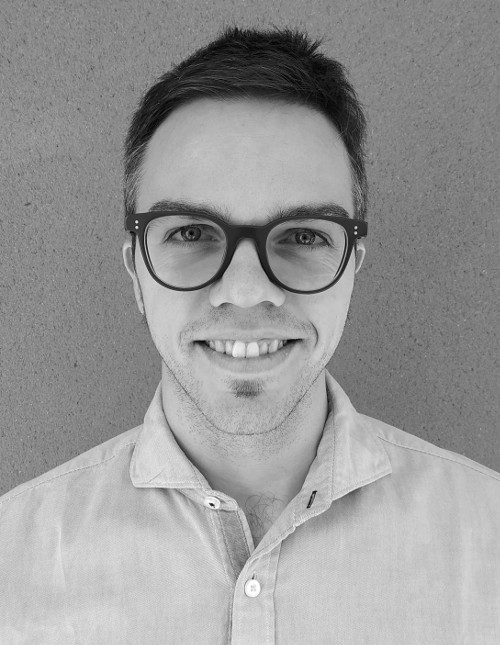}}]{Dr. Marco Tognon} (S'15-M'18) is a postdoctoral researcher at ETH Zurich, Switzerland, in the ASL group.
He received the Ph.D.~degree in Robotics in 2018, from INSA Toulouse, France, developing his thesis at LAAS-CNRS, Toulouse, France.
His thesis has been awarded with three prizes. 
He received the M.Sc. degree in automation engineering in 2014, from the University of Padua, Italy, with a master
thesis carried out at MPI for Biological Cybernetics, T\"ubingen, Germany.
From 2018 to 2020 he has been postdoctoral researcher at LAAS-CNRS, Toulouse, France. 
His current research interests include robotics, aerial physical interaction, and multi-robot systems.
\end{IEEEbiography}

\begin{IEEEbiography}[{\includegraphics[width=1in,height=1.25in,clip,keepaspectratio]{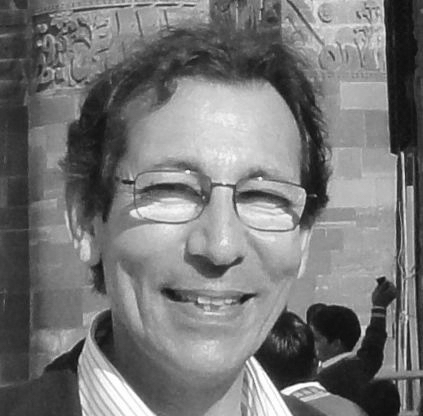}}]{Dr. Rachid Alami} is Senior Scientist at LAAS-CNRS. He received an engineer diploma in computer science in 1978 from ENSEEIHT, a Ph.D in Robotics in 1983 from Institut National Polytechnique and an Habilitation HDR in 1996 from Paul Sabatier University. He contributed and took important responsibilities in several national, European and international research and/or collaborative projects (EUREKA: FAMOS, AMR and I-ARES projects, ESPRIT: MARTHA, PROMotion, ECLA, IST: COMETS, IST FP6 projects: COGNIRON, URUS, PHRIENDS, and FP7 projects: CHRIS, SAPHARI, ARCAS, SPENCER, H2020: MuMMER, France: ARA, VAP-RISP for planetary rovers, PROMIP, several ANR projects).
His main research contributions fall in the fields of Robot Decisional and Control Architectures, Task and motion planning, multi-robot cooperation, and human-robot interaction.
Rachid Alami is currently the head of the Robotics and InteractionS group at LAAS.
He has been offered in 2019 the Academic Chair on Cognitive and Interactive Robotics at the Artificial and Natural Intelligence Toulouse Institute (ANITI)
\end{IEEEbiography}

\begin{IEEEbiography}[{\includegraphics[width=1in,height=1.25in,clip,keepaspectratio]{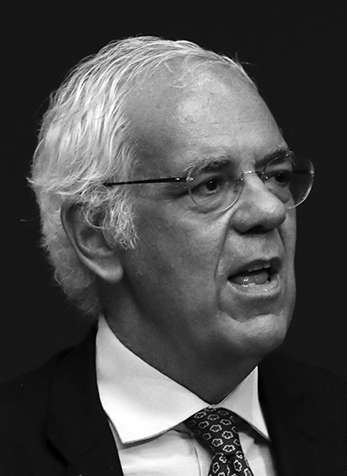}}]{Professor Bruno Siciliano} is Director of the Interdepartmental Center for Advances in RObotic Surgery (ICAROS), as well as Coordinator of the Laboratory of Robotics Projects for Industry, Services and Mechatronics (PRISMA Lab), at University of Naples Federico II. He is also Honorary Professor at Óbuda University, where he holds the Rudolf Kálmán Chair. Fellow of IEEE, ASME, IFAC, he received numerous international prizes and awards, and he was President of the IEEE Robotics and Automation Society from 2008 to 2009. Since 2012 he is on the Board of Directors of the European Robotics Association. He has delivered more than 150 keynotes and has published more than 300 papers and 7 books. His book “Robotics” is among the most adopted academic texts worldwide, while his edited volume “Springer Handbook of Robotics” received the highest recognition for scientific publishing: 2008 PROSE Award for Excellence in Physical Sciences \& Mathematics. His research team got 20 projects funded by the European Union for a total grant of 14 M€ in the last twelve years, including an Advanced Grant from the European Research Council.
\end{IEEEbiography}

\end{document}